  \providecommand\BibTeX{{%
    \normalfont B\kern-0.5em{\scshape i\kern-0.25em b}\kern-0.8em\TeX}}}
    \newcolumntype{Y}{>{\centering\arraybackslash}X}
\newcommand{\ignore}[1]{}
\DeclareDocumentCommand{\set}{m g o}{
    \ensuremath{
        \IfNoValueTF{#3}{\left}{#3}\{#1
            \IfNoValueTF{#2}{}{
                \ \IfNoValueTF{#3}{\left}{#3}\vert\ \vphantom{#1}#2\IfNoValueTF{#3}{\right.}{}
            } \IfNoValueTF{#3}{\right}{#3}\}
    }\xspace
}
    \def\\{}
\definecolor{lightgreen}{rgb}{0.75,0.92,0.61}
\newcommand{\floor}[1]{\left\lfloor #1 \right\rfloor}
\newcommand{\labelname}[1]{
  \def\@currentlabelname{#1}}
\newcommand{\R}{\mathbb{R}}
\newcommand{\Z}{\mathbb{Z}}
\providecommand{\ignore}[1]{} 
\newcommand{\ooea}{(1+1) EA\xspace}
\newcommand{\rlswself}{$RLS_{\alpha,\beta}$\xspace}
\newcommand{\realnum}{\mathbb{R}}
\newcommand{\natnum}{\mathbb{N}}
\newcommand{\intstring}{\mathbb{Z}^n}
\newtheorem{thm}{Theorem}
\newtheorem{lem}[thm]{Lemma}
\newtheorem{definition}[thm]{Definition}
\newtheorem{cor}[thm]{Corollary}
\begin{document}

\title{Run Time Bounds for Integer-Valued OneMax Functions}

\author{Jonathan Gadea Harder}
\email{ jonathan.gadeaharder@hpi.de}
\affiliation{%
  \institution{Hasso Plattner Institute\\ University of Potsdam}
  \city{Potsdam}
  \country{Germany}
}

\author{Timo Kötzing}
\email{timo.koetzing@hpi.de}
\affiliation{%
  \institution{Hasso Plattner Institute\\ University of Potsdam}
  \city{Potsdam}
  \country{Germany}
}

\author{Xiaoyue Li}
\email{xiaoyue.li@hpi.de}
\affiliation{%
  \institution{Hasso Plattner Institute\\ University of Potsdam}
  \city{Potsdam}
  \country{Germany}
}

\author{Aishwarya Radhakrishnan}
\email{aishwarya.radhakrishnan@hpi.de}
\affiliation{%
  \institution{Hasso Plattner Institute\\ University of Potsdam}
  \city{Potsdam}
  \country{Germany}
}

\renewcommand{\shortauthors}{ Kötzing, et al.}
\begin{abstract}
While most theoretical run time analyses of discrete randomized search heuristics focused on finite search spaces, we consider the search space $\Z^n$. This is a further generalization of the search space of multi-valued decision variables $\{0,\ldots,r-1\}^n$.

We consider as fitness functions the distance to the (unique) non-zero optimum $a$ (based on the $L_1$-metric) and the \ooea which mutates by applying a step-operator on each component that is determined to be varied. For changing by $\pm 1$, we show that the expected optimization time is $\Theta(n \cdot (|a|_{\infty} + \log(|a|_H)))$. In particular, the time is linear in the maximum value of the optimum $a$. Employing a different step operator which chooses a step size from a distribution so heavy-tailed that the expectation is infinite, we get an optimization time of $O(n \cdot \log^2 (|a|_1) \cdot \left(\log (\log (|a|_1))\right)^{1 + \epsilon})$. 

Furthermore, we show that RLS with step size adaptation achieves an optimization time of $\Theta(n \cdot \log(|a|_1))$.

We conclude with an empirical analysis, comparing the above algorithms also with a variant of CMA-ES for discrete search spaces.
\end{abstract}

\ignore{
\begin{CCSXML}
<ccs2012>
 <concept>
  <concept_id>10010520.10010553.10010562</concept_id>
  <concept_desc>Computer systems organization~Embedded systems</concept_desc>
  <concept_significance>500</concept_significance>
 </concept>
 <concept>
  <concept_id>10010520.10010575.10010755</concept_id>
  <concept_desc>Computer systems organization~Redundancy</concept_desc>
  <concept_significance>300</concept_significance>
 </concept>
 <concept>
  <concept_id>10010520.10010553.10010554</concept_id>
  <concept_desc>Computer systems organization~Robotics</concept_desc>
  <concept_significance>100</concept_significance>
 </concept>
 <concept>
  <concept_id>10003033.10003083.10003095</concept_id>
  <concept_desc>Networks~Network reliability</concept_desc>
  <concept_significance>100</concept_significance>
 </concept>
</ccs2012>
\end{CCSXML}

\ccsdesc[500]{Computer systems organization~Embedded systems}
\ccsdesc[300]{Computer systems organization~Redundancy}
\ccsdesc{Computer systems organization~Robotics}
\ccsdesc[100]{Networks~Network reliability}
}
\keywords{Evolutionary algorithms, integer optimization, run time analysis, theory.}



\maketitle


\section{Introduction}

Optimization problems are formalized as finding the optimal element $x$ from a fixed search space $\mathcal{X}$ given some quality measure $f: \mathcal{X} \rightarrow \realnum$. In the theory of evolutionary search heuristics, the most commonly studied discrete search space is $\mathcal{X} = \{0,1\}^n$, the set of bit strings of fixed length. Other search spaces have been considered, such as permutations \cite{frank21, doerr22}, or multi-valued decision variables \cite{gunter94,timo2017} ($\mathcal{X} = \{0,\ldots,r-1\}^n$). Note that all these search spaces are finite.

In this work we are interested in the infinite (but still discrete) search space $\Z^n$. This models a set of $n$ decision variables with an infinite (totally and discretely ordered) domain. While many search problems can be usefully addressed by translating them into optimization problems using $\mathcal{X} = \{0,1\}^n$ or another of the before-mentioned search spaces, this is impossible in principle for an infinite search space. Furthermore, understanding them in their more natural formulation can lead to more efficient optimization algorithms.

In order to analyze heuristic search in this domain, we generalize fitness functions as well as heuristic search algorithms accordingly. Note that a generalization of the $\{0,1\}^n$ search space to $\{0,\ldots,r-1\}^n$ was done in \cite{timo2017}, and we follow a similar path in the generalization, but now with the added difficulty of an infinite search space.

As a first analysis, we consider the simple setting where, for a given $a \in \Z^n$, we have the fitness function
$$
f_a : \Z^n \rightarrow \Z_{\geq 0}, x \mapsto \sum_{i=1}^n |x_i-a_i|.
$$
Minimizing this function generalizes the well-known OneMax function (class), defined on the search space $\{0,1\}^n$, to the more general space of $\Z^n$.

As algorithms, we consider the \ooea and RLS (Random Local Search), both suitably adjusted to deal with the search space $\Z^n$ as follows (see Section~\ref{sec:prelims} for a detailed description of both algorithms). First, for finite search spaces, it is common to start the search with a uniformly random search point. For the infinite search space $\Z^n$ we make the decision to start deterministically with the all-$0$ string. Since for our fitness functions only the position-wise differences of the starting point to the optimum matter, this choice does not restrict the meaningfulness of our results.

Second, as a variation operator, we consider to change either exactly one position (RLS) or each position independently with probability $1/n$ (\ooea), just as in the common definitions of these algorithms. However, while changing a \emph{bit} from $\{0,1\}$ leaves only one possible choice for the new value, changing a variable from $\Z$ leaves infinitely many choices for the new value. We consider two possible \emph{step operators}, defining how to change a value from $\Z$. The first step operator is the \emph{$\pm 1$ operator}; it either increases or decreases the value by $1$, decided uniformly at random. The second step operator is the \emph{heavy-tailed operator}; it makes a uniform decision to either increase or decrease, but instead of deterministically changing by 1, it changes by a random number. In particular, we consider a distribution of the numbers that is unbounded and is so heavy-tailed that it does not have finite expectation. 

Note that \cite{timo2017} considered a \emph{Harmonic operator} as a step operator for variables on $\{0,\ldots,r-1\}$, which gives a step of size $i$ a weight proportional to $1/i$. This cannot be directly extended to an operator on $\Z$, since the sequence $(1/i)_i$ is not summable. Instead, we take inspiration from \cite{unknown_solution_length}, where very slowly decreasing yet summable sequences were considered, to define our heavy-tailed operator. In effect, a step size of $i$ has a probability proportional to $1/(i \log(i)^{i+\varepsilon})$. The goal of this very heavy tail of the distribution is to have as high as possible a chance to gain a constant fraction of the distance to the optimum, independent of the distance to the optimum. See Section~\ref{sec:prelims} for details on the distribution.

Recently, heavy tailed distributions where used to speed up optimization in various settings. In \cite{fastMutator2017}, the authors proposed to apply a heavy-tailed mutation operator for the first time. In particular, the number of variables to change was chosen from a heavy-tailed distribution (which for us still follows the traditional binomial distribution). In contrast to our work, the heavy-tailed distributions in \cite{fastMutator2017} have finite expectation. This more explorative mutation operator was then shown to optimize so-called jump-functions faster. Since this work, further analyses have shown the use of such heavy-tailed distributions, for example for crossover and the $(1+(\lambda,\lambda))$-GA on OneMax \cite{fastMutatorCrossover} and jump-functions \cite{BenjaminHeavy-TailedGA}.

In our work we consider the expected time of the given algorithms to find the optimum of a fitness function $f_a$. This time naturally depends on $a$ (as well as $n$), specifically on its total weight $|a|_1 = \sum_{i=1}^n |a_i|$, its maximal weight $|a|_{\infty} = \max\set{|a_i|}{i \leq n}$ or its Hamming distance to the all-$0$ string $|a|_H = |\set{i}{a_i \neq 0, i \leq n}|$. Through out our analyses we consider $a$ to be non-zero.

In Section~\ref{sec:unit_mutation_operator} we formally analyze the \ooea with the $\pm 1$ operator. Theorems~\ref{thm:better_upperbound_pm_ operator} and~\ref{thm:better_lowerbound_pm_operator} show that the expected optimization time is $\Theta(n \cdot (|a|_{\infty} + \log(|a|_H)))$ on any given fitness function $f_a$. While the linear dependence on the dimension $n$ constitutes a good performance, the linear dependence on $|a|_{\infty}$ is rather slow. For comparison, note that there are $\Theta(|a|_{\infty}^n)$ many target bit strings of roughly that size. Since the \ooea gains about one bit of information with a comparison of two fitness values, a direct information theoretic lower bound for finding the optimum is at $\Omega(n \cdot \log (|a|_{\infty}))$.

In Section~\ref{sec:heavy_tailed_operator} we turn to the \ooea with the heavy-tailed operator. In Theorem~\ref{thm:heavy_tailed_operator} we show that the expected optimization time is $O(n \cdot \log^2 (|a|_1) \cdot \left(\log (\log (|a|_1))\right)^{1 + \epsilon})$ for a given $f_a$. This is already much closer to the information-theoretic lower bound mentioned above. 

Finally, we consider a version of RLS which adapts the step size during the search. This strategy was proven to be very efficient in \cite{timo2017} for the search space $\{0,\ldots,r-1\}$ and we show that also here the algorithm achieves an expected optimization time of $\Theta(n \cdot \log (|a|_1))$. Note that, for all three operators, we derive central parts of our proofs by carefully adjusting analogous proofs from~\cite{timo2017}.

With our analyses of the search space $\Z^n$, we also aim at bridging the gap towards analyses of heuristic search on $\R^n$ (continuous optimization): if one is interested in approximating the optimum up to a distance of $\varepsilon$, one can discretize the search space accordingly, arriving at the search space $\Z^n$. In optimization problems on $\R^n$, one is frequently interested only in approximating the optimum, since finding it is typically impossible in principle. In Corollary~\ref{cor:heavy_tailed_operator} we show a result about the \ooea with heavy-tailed operator approximating the optimum, showing that finding an approximation ratio of $\alpha$ scales with respect to $\alpha$ as $O(\log(1/\alpha))$.

For $\R^n$, the covariance matrix adaptation evolution strategy (CMA-ES) is a widely used optimization algorithm \cite{CMA-ES}. To extend the use of CMA-ES to address also discrete variables, authors in \cite{CMAESwMargin} propose a variant of CMA-ES (called CMA-ES with margin, CMA-ESwM). We are interested in the performance of this algorithm in optimizing an integer-valued problem in comparison to other discrete optimization algorithms. In Section~\ref{sec:experiments} we experimentally compare our algorithms with CMA-ESwM, finding that the CMA-ESwM fails to optimize with certain probability even with a large time budget, while the \ooea with the heavy-tailed operator and RLS with the self-adjusting operator can handle the instances efficiently.

The remainder of this paper is structured as follows. In Section~\ref{sec:prelims} we introduce algorithms and notation. In Sections~\ref{sec:unit_mutation_operator} and~\ref{sec:heavy_tailed_operator} we give our theoretical analyses of the \ooea. Section~\ref{sec:self_adjusting_rls} addresses the self-adjusting RLS. In Section~\ref{sec:experiments} we present our experiments.

\section{Preliminaries} \label{sec:prelims}

In this section we define the \ooea and random local search algorithms, along with the different step operators. At the end of this section we also state different drift theorems we use for our analysis.

For any $a \in \intstring$, we let 
\begin{align*}
|a|_1 &= \sum_{i = 1}^{n} |a_i|;\\
|a|_{\infty} &= \max\set{|a_i|}{i \leq n};\\
|a|_H &= |\set{i}{a_i \neq 0, i \leq n}|.
\end{align*}
Also for $a \in \intstring \setminus \{0^n\}$, we define 
$$
f_a: \intstring \rightarrow \realnum, x  \mapsto |a - x|_1.
$$
Our class of target fitness functions is $\mathscr{F} = \{f_a \mid a \in \intstring \setminus \{0^n\} \}$ to be minimized by evaluating the fitness function at any point as chosen by the algorithm.

We consider different \emph{step operators} $\mathrm{step}: \Z\mapsto \Z$. These step operators decide on the update of a mutation in a given component. We consider the following step operators.

\begin{definition}\label{defn:pm_operator}
[$\pm 1$-operator] The \emph{$\pm 1$-operator} takes an integer $x$ as input and makes the following changes: With probability $1/2$ return $x+1$ and otherwise return $x-1$.
\end{definition}

For the next operator we need a definition. Given $\epsilon > 0$, let $ c_{\epsilon} = \sum_{i = 2}^{\infty} \frac{1}{i \cdot (\log i)^{1+\epsilon}}$ (note that this sum is finite \cite{unknown_solution_length}).

\begin{definition}\label{defn:heavy_tailed_operator}
[Heavy-tailed-operator] For a given $\varepsilon > 0$, the \emph{heavy-tailed operator} takes an integer $x$ as input and makes the following changes: First sample a step size of $2^{I - 2}$ using a random variable $I$ which can take a value of any natural number $i \geq 2$ with $P(I=i) = \frac{1}{c_{\epsilon} \cdot i \cdot (\log i)^{1+\epsilon}}$. With probability $\frac{1}{2}$ then return $x+2^{I - 2}$, otherwise return $x-2^{I - 2}$.
\end{definition}

We consider the algorithms RLS and the \ooea as given by \Cref{1+1 ea} and \Cref{rls}. Both start from the initial search point being the all-$0$ string. They then proceed in rounds, each of which consists of a \emph{mutation} and a \emph{selection} step. Throughout the whole optimization process, the algorithms maintain a single individual, which is always the most recently sampled best-so-far solution. The two algorithms differ only in the \emph{mutation operator}. While $RLS$ makes a step in exactly one position (chosen uniformly at random), the \ooea makes, in each position, a step with probability $1/n$. We specify the termination criterion as the point in time when the search point has a fitness of $0$.

\SetKw{KwFrom}{from}

\begin{algorithm}
\textbf{Initialization:} $x\gets 0^n$\;
 \textbf{Optimization:} \While{$f(x) \neq 0$}{
 \For{i \KwFrom 1 \KwTo n}{
 With probability $\frac{1}{n}$ set $y_i=\text{step}(x_i)$ and set\\ $y_i=x_i$ otherwise;
 }
\If{$f(y)\leq f(x)$}{
   $x \gets y$;
   }

 }
 \caption{The \ooea minimizing a function $f:\mathbb{Z}^n \mapsto\mathbb{R}$ with a given step operator $\mathrm{step}:\Z \rightarrow \Z$}
 \label{1+1 ea}
\end{algorithm}

$RLS_{\alpha,\beta}$ maintains a search point $x\in \mathbb{Z}^n$ as well as a real-valued \emph{velocity vector} $v \in [1,\infty]^n$; we use real values for the velocity to circumvent rounding problems. The initial search point is the all-$0$s string and the initial velocity is the all-$1$s string. In one iteration of the algorithm a position $i\in [n]$ is chosen uniformly at random. The entry $x_i$ is replaced by $x_i-\floor{v_i}$ with probability $1/2$ and by $x_i+\floor{v_i}$ otherwise. The entries in positions $j\neq i$ are not subject to mutation. The resulting string $y$ replaces $x$ if its fitness is at least as good as the one of $x$, i.e.\ if $f(y)\leq f(x)$ holds. If the offspring $y$ is strictly better than its parent $x$, i.e.\ if $f(y) < f(x)$, we increase the velocity $v_i$ in the $i$-th component by multiplying it with the constant $\alpha$; we decrease $v_i$ to $\beta v_i$ otherwise. The algorithm proceeds this way until we decide to stop it. To further lighten the notation, we say that the algorithm ``\emph{moves in the right direction}'' or ``\emph{towards the target value}'' if the distance to the target is actually decreased by $\floor{v_i}$. Analogously, we speak otherwise of a step ``\emph{away from the target}'' or ``\emph{in the wrong direction}''.

\begin{algorithm}
\textbf{Initialization:} $x\gets 0^n$, $v \gets 1^n$\;
 \textbf{Optimization:} \While{$f(x) \neq 0$}{
 \For{i \KwFrom 1 \KwTo n}{
 $y\gets x$;\\
 Choose $i\in [n]$ uniformly at random;\\
 With probability $\frac{1}{2}$ set $y_i=x_i-\floor{v_i}$ and set $y_i=x_i+\floor{v_i}$ otherwise;
 }
 \uIf{$f(y)<f(x)$}{
 $v_i \gets \alpha v_i$;
 }\Else{
 $v_i\gets \max\{1,\beta v_i\}$;
 }
\If{$f(y)\leq f(x)$}{
   $x \gets y$;
   }
 }
 \caption{$RLS_{\alpha,\beta}$ with self-adjusting step sizes minimizing a function $f:\mathbb{Z}^n\mapsto \mathbb{R}$}
 \label{rls}
\end{algorithm}

A central tool in many of our proofs is drift analysis, which comprises a number of tools to derive bounds on hitting times from bounds on the expected progress a process makes towards the target. Drift analysis is currently the most powerful tool in run time analysis for evolutionary computation. We briefly collect here the tools we use.

\emph{Additive drift} is the situation that the progress is bounded by any (constant) value. This quite common situation in run time analysis was the first framed into a drift theorem, namely the following one, in \cite{driftthm}.


\begin{thm}[Additive Drift Theorem \cite{driftthm}] \label{thm:additivedrift}
Let $S \subseteq \realnum$ be a finite set of positive numbers and let $({X^t})_{t\in \natnum}$ be a sequence of random variables over $S \cup \{0\}$. Let $T$ be the random variable that denotes the first point in time $t \in \natnum$ for which $X^t \leq 0$.
Suppose that there exists a constant $\delta_1 > 0$ such that

$$ E[X^{t} - X^{t+1} \mid T > t] \geq \delta_1 $$
holds. Then

$$E[T \mid X^0] \leq \frac{X^0}{\delta_1}.$$

If there exists a constant $\delta_2 > 0$ such that

$$ E[X^{t} - X^{t+1} \mid T > t] \leq \delta_2 $$
holds. Then

$$E[T \mid X^0] \geq \frac{X^0}{\delta_2}.$$

\end{thm}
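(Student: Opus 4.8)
The plan is to prove both directions with one device: pass to the stopped process $Y^t := X^{\min(t,T)}$ and telescope the expected one-step changes. Since the $X^s$ take values in the positive set $S$ or are $0$, we have $X^t \geq 0$ throughout, hence $X^T = 0$ and $Y^t = 0$ for all $t \geq T$; moreover $0 \leq Y^t \leq M := \max S < \infty$. The object we actually control is $E[\min(t,T)]$, a finite-horizon proxy for $E[T]$ to which it increases monotonically as $t \to \infty$. Note that, since the hypotheses condition the drift only on the event $\{T > t\}$ rather than on the full history up to time $t$, a plain summation argument (rather than an optional-stopping/martingale argument) is the natural route.

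First I would establish the elementary identity $Y^s - Y^{s+1} = (X^s - X^{s+1})\,\mathbf{1}_{\{T>s\}}$: on $\{T \leq s\}$ both sides vanish, and on $\{T>s\}$ one has $Y^s = X^s$ and $Y^{s+1} = X^{s+1}$ (the latter even when $T = s+1$, since then $X^{s+1} = 0$). Taking expectations and using the first hypothesis,
\[
E[Y^s] - E[Y^{s+1}] \;=\; P(T>s)\,E[X^s - X^{s+1} \mid T>s] \;\geq\; \delta_1\, P(T>s) .
\]
Summing over $s = 0,\dots,t-1$, telescoping the left-hand side, and using the tail-sum identity $\sum_{s=0}^{t-1} P(T>s) = E[\min(t,T)]$ gives $E[X^0] - E[Y^t] \geq \delta_1\, E[\min(t,T)]$. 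Dropping the non-negative term $E[Y^t]$ yields $\delta_1 E[\min(t,T)] \leq E[X^0]$ for every $t$, and letting $t \to \infty$ (monotone convergence) gives $E[T] \leq E[X^0]/\delta_1$. Carrying the same chain of inequalities out conditionally on the realized value of $X^0$ gives the stated bound $E[T \mid X^0] \leq X^0/\delta_1$; as a by-product it shows $E[T]$ is finite, so no prior integrability of $T$ is needed.

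For the lower bound the steps mirror the above: the second hypothesis gives $E[Y^s] - E[Y^{s+1}] \leq \delta_2\, P(T>s)$, hence $E[X^0] - E[Y^t] \leq \delta_2\, E[\min(t,T)]$. The difference is that $E[Y^t]$ now has the ``wrong'' sign and cannot simply be discarded; one must argue $E[Y^t] \to 0$. If $P(T = \infty) > 0$ then $E[T] = \infty$ and the claim is trivial, so assume $T < \infty$ almost surely; then $Y^t \to X^T = 0$ pointwise, and since $0 \leq Y^t \leq M$, bounded convergence gives $E[Y^t] \to 0$. Together with $E[\min(t,T)] \to E[T]$ this yields $E[X^0] \leq \delta_2 E[T]$, and conditioning on $X^0$ completes the proof. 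I expect the only genuinely delicate points to be exactly these two limit passages in the lower bound; the rest is bookkeeping with the stopped process.
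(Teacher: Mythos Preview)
The paper does not supply a proof of this theorem; it is merely quoted from \cite{driftthm} as a standard tool in the preliminaries. So there is no ``paper's own proof'' to compare against. Your argument via the stopped process $Y^t = X^{\min(t,T)}$, telescoping, and the tail-sum identity is correct and is essentially the classical proof of additive drift. The two limit passages you flag (monotone convergence for $E[\min(t,T)]\to E[T]$ and bounded convergence for $E[Y^t]\to 0$) are indeed the only non-trivial steps, and you handle them correctly using the finiteness of $S$.

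One small caveat worth being aware of: the hypothesis as stated conditions only on the event $\{T>t\}$, not on $X^0$ or on the full history. Your telescoping yields $E[T]\leq E[X^0]/\delta_1$ cleanly, but the final step of ``carrying the same chain conditionally on $X^0$'' tacitly assumes the drift bound also holds after conditioning on $X^0$. This is how the theorem is always \emph{applied} (the drift is typically verified pointwise on the history, which is strictly stronger), but strictly from the printed hypothesis it is a small gap. This is a well-known looseness in the usual phrasing of the additive drift theorem rather than a flaw specific to your proof.
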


\emph{Multiplicative drift} (first) addresses the situation where progress is proportional to the distance from the target. For this situation quite common in run time analysis we can use the following drift theorem~\cite{multiplicative-drift}.

\begin{thm}[Multiplicative Drift Theorem \cite{firsthit}] \label{thm:multiplicativedrift}
Let $(X_t)_{t \in \natnum}$ be random variables over $\realnum$, $x_{\min} >0$, and let $T = \min\{t \mid X_t < x_{\min}\}$. Furthermore, suppose that\\

(a) $X_0 \geq x_{\min}$ and, for all $t \leq T$, it holds that $X_t \geq 0$, and that

(b) there is some value $\delta >0$ such that, for all $t < T$, it holds that $X_t - E[X_{t+1} \mid X_0, \ldots, X_t] \geq \delta X_t$.\\

Then

$$ E[T \mid X_0] \leq \frac{1+\ln\left(\frac{X_0}{x_{\min}}\right)}{\delta}.$$



\end{thm}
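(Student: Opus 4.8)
The plan is to reduce the multiplicative drift statement to the Additive Drift Theorem (\Cref{thm:additivedrift}) applied to a logarithmically rescaled potential, the standard approach for this kind of result. Concretely, I would first replace $(X_t)_{t\in\natnum}$ by the stopped, floored process $\tilde X_t := X_t$ for $t<T$ and $\tilde X_t := 0$ for $t\ge T$, and set $Y_t := g(\tilde X_t)$, where $g(0)=0$ and $g(x)=1+\ln(x/x_{\min})$ for $x\ge x_{\min}$ (equivalently, $Y_t = 1+\ln(X_t/x_{\min})$ while $t<T$, and $Y_t=0$ afterwards). The replacement is harmless: on $\{t<T\}$ one has $\tilde X_{t+1}\le X_{t+1}$ pointwise --- it equals $X_{t+1}$ unless $X_{t+1}<x_{\min}$, in which case it is $0\le X_{t+1}$ --- hence $E[\tilde X_{t+1}\mid X_0,\dots,X_t]\le E[X_{t+1}\mid X_0,\dots,X_t]\le(1-\delta)X_t=(1-\delta)\tilde X_t$, and the first time $\tilde X_t<x_{\min}$ is still exactly $T$. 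Since $Y_t\ge 1$ while $t<T$ and $Y_t=0$ for $t\ge T$, the first time $Y_t\le 0$ equals $T$, and $Y_0=1+\ln(X_0/x_{\min})$.

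The core step is to establish a uniform additive drift $E[Y_t-Y_{t+1}\mid X_0,\dots,X_t]\ge\delta$ on $\{t<T\}$. Set $L:=\ln(X_t/x_{\min})\ge 0$ and $p:=P(\tilde X_{t+1}=0\mid X_0,\dots,X_t)$. On the complementary event $\tilde X_{t+1}\ge x_{\min}$ we have $Y_{t+1}=1+\ln(\tilde X_{t+1}/x_{\min})$, and Jensen's inequality for the concave function $\ln$, together with $E[\tilde X_{t+1}\mid X_0,\dots,X_t,\;\tilde X_{t+1}\ge x_{\min}]\le (1-\delta)X_t/(1-p)$, gives $E[Y_{t+1}\mid X_0,\dots,X_t]\le (1-p)\bigl(1+\ln(1-\delta)+L-\ln(1-p)\bigr)$. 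Thus the drift is at least $\phi(p):=(1+L)-(1-p)\bigl(1+\ln(1-\delta)+L-\ln(1-p)\bigr)$, and it remains to show $\phi(p)\ge\delta$ for all $p\in[0,1]$, $L\ge0$. A short analysis does this: $\phi$ is convex in $p$ with $\phi(0)=-\ln(1-\delta)\ge\delta$ and $\phi(p)\to 1+L\ge 1$ as $p\to 1$; if $L\ge u:=\ln\tfrac{1}{1-\delta}$ then $\phi$ is increasing and $\phi(p)\ge\phi(0)\ge\delta$, while otherwise its minimizer satisfies $1-p^\star=(1-\delta)e^{L}$ with value $\phi(p^\star)=1+L-(1-\delta)e^{L}$, which is $\ge\delta$ by the elementary inequality $1+Le^{u}\ge e^{L}$ valid for $0\le L\le u$. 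Feeding the pointwise bound into \Cref{thm:additivedrift} yields $E[T\mid X_0]\le Y_0/\delta=(1+\ln(X_0/x_{\min}))/\delta$.

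Two minor technicalities remain. \Cref{thm:additivedrift} is stated for a finite state space $S\cup\{0\}$, whereas $Y_t$ ranges over $\{0\}\cup[1,\infty)$, so one invokes (or quotes) the routine unbounded-state-space version of additive drift. One should also record that $T<\infty$ almost surely, so that the drift condition is not vacuous; this follows from $E[X_t]\le(1-\delta)^tX_0\to 0$ and Markov's inequality.

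I expect the genuine obstacle to be exactly the overshoot handling in the second paragraph. The familiar heuristic ``take the $\ln$-potential and apply Jensen'' only delivers drift $\ge-\ln(1-\delta)\ge\delta$ when the process cannot jump below $x_{\min}$; a jump all the way to $0$ sends $\ln$ to $-\infty$ and breaks the naive estimate. The resolution --- noticing that such jumps drop the potential by at least $1\ge\delta$, and quantifying the resulting trade-off through the convexity of $\phi$ and the inequality $1+Le^{u}\ge e^{L}$ --- is where the real work sits; the rest is bookkeeping.
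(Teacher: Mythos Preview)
The paper does not prove \Cref{thm:multiplicativedrift}; it is quoted from the literature (attributed to \cite{firsthit}, with \cite{multiplicative-drift} mentioned just before) and used as a black-box tool for the later run time arguments. There is therefore no in-paper proof to compare your proposal against.

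That said, your argument is essentially the standard proof and is correct. Reducing to additive drift via the shifted log-potential $Y_t=1+\ln(X_t/x_{\min})$ is exactly how the cited result is established; the point you flag as the ``genuine obstacle'' --- that a jump below $x_{\min}$ breaks the naive Jensen estimate --- is real, and your convexity-of-$\phi$ analysis handles it cleanly (the key inequality $1+Le^{u}\ge e^{L}$ for $0\le L\le u$ follows at once from $f(L)=1+Le^{u}-e^{L}$ satisfying $f(0)=0$ and $f'(L)=e^{u}-e^{L}\ge 0$). Two small remarks. First, your computation tacitly assumes $\delta<1$ so that $\ln(1-\delta)$ makes sense; this is automatic, since on $\{t<T\}$ one has $X_t\ge x_{\min}>0$ and $X_{t+1}\ge 0$, whence $0\le E[X_{t+1}\mid\cdots]\le(1-\delta)X_t$ forces $\delta\le 1$, and $\delta=1$ gives $T=1$ a.s. Second, as you note yourself, \Cref{thm:additivedrift} is stated in the paper only for a finite state space, so strictly speaking one must invoke the general-state-space version; this is routine but worth saying explicitly.
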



\section{Unit Mutation Strength} \label{sec:unit_mutation_operator}
In this section, we regard the mutation operator that applies only $\pm 1$ changes to each component. We give a tight bound on the expected run time of the \ooea with the $\pm$ operator. We start by stating the following lemma which bounds the expected cost of a given random variable over $[n]$ for $n \in \natnum$, which we later use in our analysis.

\begin{lem}\cite[Lemma 13]{timo2017} \label{cost_bound}
Let $n \in \natnum$ be fixed, let $q$ be a cost function on elements of $[n]$ and let $c$ be a cost function on subsets of $[n]$. Furthermore, let a random variable $S$ ranging over subsets of $[n]$ be given. Then we have

\begin{align}
    \forall T\subseteq [n]: c(T)\leq \sum_{i\in T} q(i)\implies E[c(S)]\leq \sum_{i=1}^n q(i)P(i\in S)\end{align}
    and \begin{align}
    \forall T\subseteq [n]:c(T)\geq \sum_{i\in T} q(i)\implies E[c(S)]\geq \sum_{i=1}^n q(i)P(i\in S)
\end{align}
\end{lem}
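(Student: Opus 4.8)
The plan is to derive both implications directly from linearity of expectation, after rewriting the right-hand side of each hypothesis as a sum over all of $[n]$ weighted by indicators. First I would record the identity $\sum_{i \in T} q(i) = \sum_{i=1}^n q(i)\,\mathbf{1}[i \in T]$, valid for every $T \subseteq [n]$. Assuming $c(T) \le \sum_{i \in T} q(i)$ for all $T \subseteq [n]$ and instantiating this at the (random) set $T = S$, we obtain the pointwise inequality of random variables
\[
c(S) \;\le\; \sum_{i=1}^n q(i)\,\mathbf{1}[i \in S].
\]

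Taking expectations and using monotonicity and linearity of expectation then yields
\[
E[c(S)] \;\le\; E\!\left[\sum_{i=1}^n q(i)\,\mathbf{1}[i \in S]\right] \;=\; \sum_{i=1}^n q(i)\,E\big[\mathbf{1}[i \in S]\big] \;=\; \sum_{i=1}^n q(i)\,P(i \in S),
\]
which is the first claim. For the second claim I would repeat the argument verbatim with every inequality reversed: from $c(T) \ge \sum_{i \in T} q(i)$ for all $T$ we get $c(S) \ge \sum_{i=1}^n q(i)\,\mathbf{1}[i \in S]$ pointwise, and taking expectations gives $E[c(S)] \ge \sum_{i=1}^n q(i)\,P(i \in S)$.

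Equivalently, if one prefers to avoid indicators, the same conclusion follows by expanding $E[c(S)] = \sum_{T \subseteq [n]} P(S=T)\,c(T)$, bounding each $c(T)$ by $\sum_{i\in T} q(i)$, and interchanging the order of the resulting finite double sum, using $\sum_{T \subseteq [n]} P(S=T) \sum_{i \in T} q(i) = \sum_{i=1}^n q(i) \sum_{T \ni i} P(S=T) = \sum_{i=1}^n q(i)\,P(i \in S)$. Since $[n]$ is finite, all sums involved are finite and the interchange requires no justification beyond commutativity of addition. I do not expect any genuine obstacle: the proof is essentially a one-line application of linearity of expectation, and it remains valid even if $q$ takes negative values, since the argument never splits a sum according to the sign of $q(i)$.
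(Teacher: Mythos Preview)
Your argument is correct: rewriting $\sum_{i\in T} q(i)$ as $\sum_{i=1}^n q(i)\,\mathbf{1}[i\in T]$, applying the hypothesis pointwise at $T=S$, and then taking expectations is exactly the right (and essentially the only) way to prove this. Note, however, that the paper does not supply its own proof of this lemma; it is quoted verbatim from \cite[Lemma~13]{timo2017}, so there is nothing in the present paper to compare against.
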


The following theorem gives an upper bound on the expected optimization time of the $\ooea$ optimizing any $f_a \in \mathscr{F}$ with the $\pm 1$ operator.

\begin{thm} \label{thm:better_upperbound_pm_ operator}
     Let $f_a \in \mathscr{F}$. Then the expected optimization time of the \ooea with $\pm 1$ operator starting with all $0$ integer string on $f_a$ is $O(n\cdot(|a|_{\infty} +  \log(|a|_H)))$.
\end{thm}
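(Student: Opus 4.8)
The plan is to decompose the optimization process into two phases and bound each separately using drift analysis, mirroring the structure of the analogous proof in~\cite{timo2017} but adapted to the infinite domain.

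First I would observe that for the $\pm 1$ operator, each component $x_i$ behaves essentially independently: a step in position $i$ changes $|x_i - a_i|$ by exactly $\pm 1$, and the selection step accepts the offspring iff the total fitness does not increase. Since steps in distinct components affect disjoint parts of the fitness sum, a greedy (position-wise) argument applies — a move that improves some coordinates and worsens others by less is accepted, and in fact one can argue that each position makes independent progress as if it were its own $(1+1)$~EA on a one-dimensional \OneMax-like problem with the all-ones mutation probability replaced by $1/n$. I would make this rigorous via \Cref{cost_bound}: define for a set $T$ of positions the ``cost'' $c(T)$ as the expected time contribution, bound it by a sum of per-position costs $q(i)$, and then sum over positions weighted by $P(i \in S) = 1/n$.

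Next, the per-position analysis splits into two regimes. \textbf{Phase 1 (large distance):} while $|x_i - a_i|$ is large, a step in position $i$ moves toward $a_i$ with probability $1/2$ and away with probability $1/2$, but the away-move is rejected by selection whenever it would increase total fitness — which, crucially, is governed by what happens in \emph{all} positions simultaneously. The clean way around this coupling is to note that as long as \emph{some} position still has positive distance, the drift of the total fitness $f_a(x)$ is at least the probability that exactly one position is selected and it moves the right way, which is $\Omega(1)$ (constant, since with probability $\ge (1-1/n)^{n-1} \cdot (1/n) \cdot (1/2) \cdot (\text{number of nonzero positions})$ we get at least constant expected decrease when distances are large). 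Applying the Additive Drift Theorem (\Cref{thm:additivedrift}) with $X^t = f_a(x^t)$ and $\delta_1 = \Omega(1)$... but that gives $O(|a|_1)$, which is too weak. So instead I would run additive drift \emph{per coordinate}: each nonzero coordinate needs its distance driven down, selected with probability $1/n$ per round and then halving-or-worse; the total work to bring every coordinate from its initial value to $O(\log |a|_H)$-ish is where the $n \cdot |a|_\infty$ term comes from, since the largest coordinate needs $\Theta(|a|_\infty)$ successful $\pm1$ steps and each takes $\Theta(n)$ rounds in expectation (accounting for the selection probability and the fact that the drift stays constant as long as that coordinate dominates).

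\textbf{Phase 2 (small distances, the coupon-collector tail):} once all coordinates have distance $O(1)$ — more precisely once $f_a(x) = O(|a|_H)$ with each coordinate contributing $O(1)$ — the remaining nonzero coordinates behave like a multiplicative process: the number of ``wrong'' coordinates is at most $|a|_H$, each round fixes one with probability $\Omega((\text{number wrong})/n)$, so by the Multiplicative Drift Theorem (\Cref{thm:multiplicativedrift}) with $\delta = \Omega(1/n)$ and $X_0 \le |a|_H$ we get $O(n \log |a|_H)$. Summing the two phases yields $O(n(|a|_\infty + \log |a|_H))$.

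\textbf{The main obstacle} I anticipate is handling the selection coupling cleanly in Phase~1: because acceptance depends on the \emph{global} fitness, a per-coordinate drift statement is not literally true (a bad step in a small coordinate can be ``paid for'' by a good step in a large one, and vice versa). The fix is the standard one — argue that rejected moves only help (they never increase any per-coordinate distance by the greedy structure of $L_1$), so the per-coordinate distance is stochastically dominated by an independent one-dimensional process that only moves toward the target, and then invoke \Cref{cost_bound} to aggregate. Getting the constants in the drift bound to not degrade as coordinates are fixed (so that $(1-1/n)^{n-1} = \Omega(1)$ is used correctly) is the routine-but-fiddly part I would defer to the detailed calculation.
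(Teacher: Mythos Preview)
Your two-phase plan has a genuine gap in Phase~1, and it stems from the ``main obstacle'' you yourself flag. The claim that ``the per-coordinate distance is stochastically dominated by an independent one-dimensional process that only moves toward the target'' is false for the \ooea: an accepted multi-position mutation can \emph{increase} an individual $d_i$ (e.g.\ $d_i\to d_i+1$ while some $d_j\to d_j-1$, net fitness unchanged), and positions already at $d_i=0$ can drift away again. So per-coordinate additive drift of order $\Omega(1/n)$ is not available, and \Cref{cost_bound} does not give you a decomposition of \emph{run time} into per-position contributions --- in the paper it is used only to bound the expected one-step \emph{potential change} over the random mutation set, which is a different statement. Without the domination, your Phase~1 collapses: additive drift on the total fitness $\sum_i d_i$ only yields $\Omega(k/n)$ with $k$ the number of nonzero positions, and $k$ can be $1$ while the fitness is still $|a|_\infty$, giving $O(n|a|_1)$ rather than $O(n|a|_\infty)$. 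Phase~2 has the same issue in miniature: the number of nonzero positions is not monotone, so a plain coupon-collector argument needs patching.

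The paper avoids the phase split entirely by choosing a single potential $g(x)=\sum_i(\omega^{d_i}-1)$ for a constant $\omega>1$ and proving multiplicative drift $E[g(x)-g(Y)]\ge \frac{c}{2\omega n}\,g(x)$. The exponential weights are exactly what absorbs the coupling: a unit improvement at a large-$d_i$ position gains $\omega^{d_i}(1-1/\omega)$, which dominates the $O((\omega-1)^2\omega^{d_j})$ loss that any worsening position can contribute. The multi-position case is handled by a pairing argument on accepted offspring (each worsening pair is matched to an improving pair with at least as much potential gain), and \Cref{cost_bound} is then applied to sum these per-position bounds over the random mutation set. With $X_0\le |a|_H\cdot\omega^{|a|_\infty}$, the multiplicative drift theorem gives $O(n(\,|a|_\infty+\log|a|_H))$ in one shot. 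If you want to rescue your outline, the missing idea is precisely this exponential weighting; once you have it, the two phases become a single application of \Cref{thm:multiplicativedrift}.
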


\begin{proof}
Our proof idea is similar to the proof idea in \cite[Theorem 12]{timo2017}. We  will use the multiplicative drift analysis (see \Cref{thm:multiplicativedrift}) to prove this theorem. For our analysis, we make use of two edge cases that both have a fitness of $n$; which would be only one entry being incorrect but $n$ away from its target and the case that every entry in the target is $1$. The former is hard to optimize since it takes long for the algorithm to successively change the incorrect position, while the latter can be solved very fast since there are multiple ways of progressing in each step. We exploit this by giving each position a weight exponential in the amount that is incorrect, and then sum over those weights.
With any search point $x \in\Omega$ we associate a vector $d \in \mathrm{R}^n$ such that, for all $i \leq n, d_i = |a_i - x_i|$.
Given some $\omega>1$ we consider the potential 
\begin{align}
    g(x)\coloneqq \sum_{i=1}^{n} (\omega^{d_i}-1)
\end{align}

Let $x_t$ be the integer string at iteration $t$ when the \ooea with $\pm 1$ operator is optimizing $f_a$. Let $X_t = g(x_t)$ and let $T = \min\{t \geq 0 \mid X_t = 0\}$. Further let $E_1$ be the event that $X_{t+1}$ is obtained by mutating exactly one position and let $E_2$ be the event that $X_{t+1}$ is obtained by mutating at least two positions. Then $X_0 \leq |a|_H\cdot  \omega^{|a|_{\infty}}$, since we start with all $0$ integer string.
We denote $O$ as the set of already optimized positions and $A(S)$ as the set of accepted offspring by only manipulating $S\subseteq [n]$ positions.
Now we bound the expected drift. Since $t < T$, at least one of the position is at least $1$ distant far from the optimum and the probability to mutate this position in the right direction is $\frac{1}{2n}$ and the probability to not mutate any other positions is $\left(1 - \frac{1}{n}\right)^{n - 1}$. Therefore,
\begin{align*}
E[X_{t} - X_{t+1}& \mid t < T, E_1]\cdot P(E_1)\\ &\geq \sum_{i\in [n]\setminus O} \frac{1}{2n}\left(1 - \frac{1}{n}\right)^{n - 1} (\omega^{d_i}-1)-(\omega^{d_i-1}-1) \\
&\geq \frac{1}{2ne}\sum_{i\in [n]\setminus O} (\omega^{d_i}-1)-(\omega^{d_i-1}-1)\\
&= \frac{1}{2ne}\sum_{i\in [n]\setminus O}^{n} \left(1-\frac{1}{\omega}\right) \cdot \omega^{d_i}\\
&\geq \frac{1}{2ne}\sum_{i=1}^{n} \left(1-\frac{1}{\omega}\right) \cdot \left(\omega^{d_i}-1\right)\\
&=  \frac{\omega-1}{2\omega ne}\sum_{i=1}^{n} (\omega^{d_i}-1)
\end{align*}
Let $U$ be the set of all tuples $(y,i),y\in A(S)$ where the $i$-th positions worsens. As we only consider accepted mutations, we have that, for all $y \in A(S), \sum_{i\in S} d(y_i , z_i) - d_i \leq 0$.This implies that there are at least as many improving-pairs as there are worsening-pairs in $A(S) \cross S$.  For every $(y,i)$ with $d_i=0$ where bit position $i$ changes in the wrong direction, there is a $(y',i)\in U$ with bit position $i$ changing in the right direction and the remaining positions behaving the same. The change in potential for both pairs added is $$\omega^{d_i+1}-\omega^{d_i}+\omega^{d_i-1}-\omega^{d_i}= \omega^{d_i}\frac{(\omega-1)^2}{\omega}$$
Since there are in total at least as many improving-pairs as worsening-pairs, we can further map injectively each $y\in A(S)$ with a correct position ($d_i=0$) that changes in the wrong direction to another $y' \in A(S)$ with some improving position. The change in potential for both positions added is $$\omega-1+\omega^{d_{i'}-1}-\omega^{d_{i
'}}= (1-\omega^{d_i'-1})\cdot (\omega-1)< \omega^{d_i}\frac{(\omega-1)^2}{\omega}$$

 Let $Y$ be the random variable describing the search point after one cycle of mutation and selection. The random variable Y is completely determined by choosing a set $S \subseteq [n]$ of bit positions to change in $x$ and then, for each such position $i \in S$, choosing whether to change towards or away from the target. For each possible $S \subseteq [n]$, let $Y(S)$ be the random variable $Y$ conditional on making changes exactly at the bit positions of S. Note that
since we increase/decrease each index by $1$ with the same probability, $Y(S)$ is the uniform distribution on $A(S)$. Further let \begin{align*}
    c(S) &\coloneq E[g(Y(S))-g(x)] \\
    &=\frac{1}{|A(S)|}\sum_{y\in A(S)} g(y)-g(x)\\
    &= \frac{1}{|A(S)|}\sum_{y\in A(S)}\sum_{i=1}^n \left(\omega^{|y_i-a_i|}-\omega^{d_i}\right)\\
    &= \frac{1}{|A(S)|}\sum_{(y,i)\in U} \left(\omega^{|y_i-a_i|}-\omega^{d_i}+\omega^{|y_{i'}-a_{i'}|}-\omega^{d_{i'}}\right)\\
    &\leq \frac{1}{2}\sum_{i\in S} \omega^{d_i}\frac{(\omega-1)^2}{\omega}.
    \end{align*}

    Using Lemma \ref{cost_bound}, we get that
    $$E[g(Y)-g(x)|E_2]\leq \sum_{i=1}^n \frac{1}{n}\omega^{d_i}\frac{(\omega-1)^2}{2\omega}=\frac{(\omega-1)^2}{2\omega n}\sum_{i=1}^n \omega^{d_i}$$
We can use any $\omega>1$ such that $\omega-1-e(\omega-1)^2 >0$ and set $c=(\omega-1-e(\omega-1)^2)/e$. One can verify that for $\omega=1.2$ we obtain $0.0912687>0$, so such $\omega$ exist. In total we get
\begin{align*}
    E[g(x)-g(Y)]\geq \frac{\omega-1}{2\omega ne}\sum_{i=1}^n \omega^{d_i}-\frac{(\omega-1)^2}{2\omega n}\sum_{i=1}^n\omega^{d_i}\\
    =\frac{\omega-1-e(\omega-1)^2}{2\omega n e}\sum_{i=1}^n \omega^{d_i}=\frac{c}{2\omega n}\sum_{i=1}^n\omega^{d_i}\geq \frac{c}{2\omega n} g(x).
\end{align*}
Therefore by the multiplicative drift theorem (\Cref{thm:multiplicativedrift}), we have
\[E[T]  = O((|a|_{\infty}+ \log(|a|_H))\cdot n)= O(|a|_{\infty} \cdot n + n\cdot \log(|a|_H)).\]
\end{proof}

We show in the following theorem that this upper bound on the \ooea is sharp by proving the same asymptotic lower bound.
\begin{thm} \label{thm:better_lowerbound_pm_operator}
Let $f_a \in \mathscr{F}$. Then the expected optimization time of the \ooea with $\pm 1$ operator starting with all $0$ integer string on $f_a$ is $\Omega(|a|_{\infty} \cdot n + n\cdot \log(|a|_H))$.
\end{thm}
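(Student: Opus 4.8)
The plan is to prove the two summands separately and then combine them, since $\Omega(A) + \Omega(B) = \Omega(A+B)$. First I would establish the $\Omega(n \cdot |a|_\infty)$ term. Pick a position $j$ attaining $|a_j| = |a|_\infty$. With the $\pm 1$ operator each accepted step changes $|x_j - a_j|$ by at most $1$, so coordinate $j$ needs at least $|a|_\infty$ successful improving steps at position $j$. I would use additive drift (the lower-bound half of Theorem \ref{thm:additivedrift}) on the potential $X_t = |x_{t,j} - a_j|$, restricted to position $j$: in any single iteration, position $j$ is selected for mutation with probability $1/n$, and even conditioned on that it moves towards the target by at most $1$; hence $E[X_t - X_{t+1} \mid T > t] \le 1/n =: \delta_2$, giving $E[T] \ge |a|_\infty \cdot n$. (One must check position $j$ is never "overshot" past $a_j$ in a way that the potential could drop by more than expected — but a single $\pm 1$ step changes $|x_{t,j}-a_j|$ by exactly $1$, so the bound is clean.) This already yields $\Omega(n \cdot |a|_\infty)$.

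Next I would establish the $\Omega(n \log(|a|_H))$ term. This is the "coupon-collector" style argument: let $H = |a|_H$ be the number of nonzero target coordinates; initially all $H$ of them are wrong. In each iteration only the positions selected for mutation (each with probability $1/n$) can become correct, and a position that is currently at distance $\ge 2$ cannot become correct in one step at all. Consider the last, say, $H/2$ coordinates to be fixed among those that start at distance exactly $1$; in fact, I would argue that with constant probability a constant fraction of the nonzero coordinates $a_i$ satisfy $|a_i| = 1$ is NOT guaranteed — instead I should argue more carefully. The cleaner route: regardless of the values, each wrong coordinate requires at least one iteration in which it is selected. By a standard coupon-collector lower bound, the time until all $H$ specific coordinates have each been selected at least once is $\Omega(n \log H)$ in expectation; and the optimization cannot terminate before every nonzero coordinate has been touched at least once (since it starts at $0 \ne a_i$). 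Thus $E[T] = \Omega(n \log H)$.

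I expect the second part — making the coupon-collector bound rigorous as a lower bound on $E[T]$ — to be the main (though modest) obstacle, because one has to phrase it as: $T$ stochastically dominates the first time all $H$ distinguished coordinates have been selected for mutation, and that random variable is a sum/maximum of geometric-type waiting times with expectation $\Theta(n \log H)$; a clean way is to note $P(\text{coordinate } i \text{ never selected in } t \text{ steps}) = (1-1/n)^t$, apply a union bound complement to show that with probability at least $1/2$ some coordinate is still untouched at time $t = \lfloor (n-1)\ln(H/2)\rfloor$ or so, hence $E[T] \ge t/2 = \Omega(n\log H)$. Combining this with the first part and using $\max(A,B) \ge (A+B)/2$ gives the claimed $\Omega(|a|_\infty \cdot n + n \log(|a|_H))$, completing the proof.
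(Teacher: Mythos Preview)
Your proposal is correct and mirrors the paper's proof almost exactly: the paper also handles the two summands separately, obtaining $\Omega(n\cdot|a|_\infty)$ via the additive drift lower bound on the single coordinate of maximal distance (drift at most $1/n$), and $\Omega(n\log|a|_H)$ by showing that with probability at least $1/2$ some nonzero coordinate has not been selected after $(n-1)\ln|a|_H$ iterations, then using $E[T]\ge t\cdot P(T\ge t)$. The only cosmetic difference is that the paper computes the ``some coordinate untouched'' probability directly as $1-(1-(1-1/n)^t)^{|a|_H}$ rather than phrasing it as a union-bound complement, but the arithmetic and conclusion are the same.
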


\begin{proof}
    The lower bound $|a|_{\infty} \cdot n$ follows from looking at the position with a distance of $|a|_{\infty}$ to the target. The drift in the right direction is at most $\frac{1}{n}$, which is the probability of mutating the position. Therefore by the additive drift theorem (\Cref{thm:additivedrift}), we have a run time of  $\Omega(n\cdot |a|_{\infty})$. For the second part we prove that $P(T \geq (n - 1)\log( |a|_H))$ is at least $\frac{1}{2}$. 
The probability that a particular index does not get modified in any of the $t$ iterations is $\left(1 - \frac{1}{n}\right)^{t}$. The previous statement implies that the probability that it does get modified at least once is $1 - \left(1 - \frac{1}{n}\right)^{t}$. Therefore the probability that $|a|_H$ indices gets modified at least once in $t$ iterations is $\left(1 - \left(1 - \frac{1}{n}\right)^{t}\right)^{|a|_H}$. This in turn implies that the probability that at least one of the $|a|_H$ indices does not get modified in $t$ iterations is  $1 - \left(1 - \left(1 - \frac{1}{n}\right)^{t}\right)^{|a|_H}$. If $ t = (n - 1) \ln(|a|_H)$, then 
\begin{align*}
  1 - \left(1 - \left(1 - \frac{1}{n}\right)^{t}\right)^{|a|_H} &= 1 - \left(1 - \left(1 - \frac{1}{n}\right)^{(n - 1) \ln |a|_H}\right)^{|a|_H}  \\
  &\geq 1 - \left(1 - \left(\frac{1}{e}\right)^{\ln |a|_H}\right)^{|a|_H} \\
  &= 1 - \left(1 - \frac{1}{|a|_H}\right)^{|a|_H} \geq 1 - e^{-1} \geq \frac{1}{2}.\
\end{align*}
Now we have expected time
\begin{align*}
    E[T] 
    &= \sum_{t = 1}^{\infty} t \cdot P(T = t) \\
    &= \sum_{t = 1}^{\infty} P(T \geq t) \\
    &\geq (n - 1) \ln |a|_H \cdot P(T \geq (n - 1) \ln |a|_H)\\
    &\geq (n - 1) \ln |a|_H \cdot \frac{1}{2} = \Omega(n \ln |a|_H).
\end{align*}
\end{proof}

\section{Heavy-Tailed Mutation Strength}\label{sec:heavy_tailed_operator}

In this section we discuss the behavior of the \ooea with the heavy-tailed mutation operator on optimizing any $f_a \in \mathscr{F}$. First, in the following theorem, we give an \emph{upper} bound on the expected optimization time.

\begin{thm} \label{thm:heavy_tailed_operator}
Let $f_a \in \mathscr{F}$. Then the expected optimization time of the \ooea, starting with the all-$0$s integer string, with the \emph{heavy-tailed operator} (see Definition \ref{defn:heavy_tailed_operator}) with parameter $\epsilon > 0 $ on $f_a$ is $O(n \cdot \log^2 |a|_1 \cdot \left(\log (\log |a|_1)\right)^{1 + \epsilon})$.
\end{thm}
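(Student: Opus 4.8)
The plan is to analyze one coordinate at a time and use additive/multiplicative drift on a carefully chosen potential, exploiting the fact that the heavy-tailed step operator has a constant probability (depending only weakly on scale) of roughly halving the distance to the target in a given coordinate. Since the fitness $f_a$ decomposes as a sum over coordinates and mutations act independently per coordinate (each selected with probability $1/n$), I would first argue it suffices to bound the time to optimize a single coordinate whose initial distance is at most $|a|_\infty \le |a|_1$, and then pay an extra factor of $n$ for the selection probability plus a union-bound/coupon-collector style argument over the at most $|a|_H \le n$ active coordinates. Concretely, I expect the structure to mirror the $\pm 1$ proof: define a per-coordinate potential and sum, but now the potential should be something like $\log$ of the distance rather than the distance itself, because the heavy-tailed operator makes geometric (multiplicative) progress.

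**The key quantitative step** is to lower-bound, for a coordinate currently at distance $d \ge 1$ from its target, the probability that a single mutation of that coordinate lands strictly closer and in fact within distance, say, $d/2$. A step of size $2^{I-2}$ with $I=i$ has probability $\tfrac{1}{c_\epsilon\, i (\log i)^{1+\epsilon}}$; to get a step of magnitude $\Theta(d)$ we need $i \approx \log_2 d + O(1)$, which has probability $\Omega\!\big(\tfrac{1}{\log d \,(\log\log d)^{1+\epsilon}}\big)$, and with probability $1/2$ the sign is correct, and with probability $(1-1/n)^{n-1} \ge 1/e$ no other coordinate is touched (for the $(1+1)$ EA; for the per-coordinate view this is the factor handling non-interference). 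So in one iteration, conditioned on selecting this coordinate, the distance drops to at most $d/2$ (or below) with probability $p(d) = \Omega\!\big(\tfrac{1}{\log d\,(\log\log d)^{1+\epsilon}}\big)$. Feeding $\ell = \log_2 d$ into a multiplicative-drift or level-based argument: we need to traverse $O(\log |a|_1)$ "halving levels", each level $\ell$ costing in expectation $O\!\big(n \cdot \log d_\ell \cdot (\log\log d_\ell)^{1+\epsilon}\big) = O\!\big(n \log|a|_1 (\log\log|a|_1)^{1+\epsilon}\big)$ iterations, giving $O\!\big(n \log^2 |a|_1 (\log\log|a|_1)^{1+\epsilon}\big)$ total for one coordinate. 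The $n$ active coordinates can be handled in parallel (the $(1+1)$ EA works on all simultaneously), and a standard argument — e.g. summing the per-coordinate bounds or applying multiplicative drift to the potential $g(x) = \sum_i \log_2(1+d_i)$ where each $d_i$ contributes multiplicative drift $\Omega\big(1/(n \log|a|_1 (\log\log|a|_1)^{1+\epsilon})\big)$ — yields the claimed bound, using that the sum of initial log-distances is at most $|a|_H \log |a|_\infty \le |a|_1 \log|a|_1$, whose logarithm is $O(\log|a|_1)$.

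**I would carry this out as follows.** (i) State and prove the single-coordinate step-probability lemma: for $d \ge 2$, $\Pr[\text{new distance} \le d/2 \mid \text{coordinate selected}] \ge \tfrac{1}{2e\,c_\epsilon}\cdot\tfrac{1}{(\lceil\log_2 d\rceil+2)(\log(\lceil\log_2 d\rceil+2))^{1+\epsilon}} =: p(d)$, being slightly careful that a step of size $2^{I-2}$ with $I = \lceil\log_2 d\rceil + 2$ has magnitude in $[d, 2d)$ so that moving in the right direction brings the distance into $[0, d)$ and in fact a nearby choice of $I$ lands it in $[0, d/2]$ — here I may need to look at two consecutive values of $I$ to guarantee the distance at least halves, a routine case check. (ii) Define the potential $g(x) = \sum_{i=1}^n h(d_i)$ with $h(0)=0$ and $h(d) = \lfloor \log_2 d\rfloor + 1$ for $d \ge 1$ (or a smoothed version), show $g(x_0) \le |a|_H (\log_2 |a|_\infty + 1) \le |a|_1 \log_2|a|_1$, and show $g$ only decreases since the EA accepts only non-worsening offspring (need: any accepted move cannot increase $g$ — true because $f$ doesn't increase and, coordinate-wise, a non-worsening change to $d_i$ cannot increase $h(d_i)$ when only one coordinate changes; for multi-coordinate changes I'd reuse the Lemma~\ref{cost_bound} bookkeeping as in the $\pm1$ proof, pairing improving and worsening coordinates). (iii) Lower-bound the drift: $E[g(x_t) - g(x_{t+1}) \mid t<T] \ge \sum_{i : d_i \ge 1} \tfrac{1}{n} p(d_i) \cdot \big(h(d_i) - h(d_i/2)\big) \ge \sum_{i:d_i\ge1} \tfrac{1}{n} p(d_i) \ge \tfrac{p(|a|_1)}{n}$, using $h(d)-h(d/2) \ge 1$. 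Then additive drift gives $E[T] \le g(x_0) \cdot n / p(|a|_1) = O\big(n \log|a|_1 \cdot \log|a|_1 (\log\log|a|_1)^{1+\epsilon}\big)$, which is the claim.

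**The main obstacle** I anticipate is step (ii)/(iii) done cleanly with the multi-coordinate mutations of the $(1+1)$ EA: exactly as in the $\pm 1$ proof, I must show that the contribution of iterations flipping $\ge 2$ coordinates does not destroy the drift — i.e. the expected potential change restricted to such iterations is $o$ of the single-coordinate drift, or at worst a constant-factor loss absorbed into $p$. This requires the same injective pairing of worsening coordinates with improving ones inside each accepted offspring set $A(S)$, but now with the log-potential $h$ instead of $\omega^{d_i}$, and one must check the inequality "change from an improving coordinate dominates change from a paired worsening coordinate" still holds for $h$; it does, but sub-exponential potentials make the bound looser, so I'd likely use $\omega^{\,h(d_i)}$ or work directly with $\sqrt{d_i}$ or another concave potential to keep the algebra parallel to Theorem~\ref{thm:better_upperbound_pm_ operator}. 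A cleaner alternative that sidesteps the multi-flip bookkeeping entirely: restrict attention to the event $E_1$ that exactly one coordinate mutates (probability $\Omega(1)$), note accepted $\ge 2$-flip iterations never increase $g$ by the pairing argument (so they can only help), and run additive drift with the $E_1$-only drift bound — this is the route I would actually write up, as it matches the paper's established technique with minimal new work.
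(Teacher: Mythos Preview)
Your proposal has a genuine gap in step~(iii): the arithmetic in the final line does not work out. You write
\[
E[T] \;\le\; g(x_0)\cdot \frac{n}{p(|a|_1)} \;=\; O\!\big(n\,\log|a|_1\cdot \log|a|_1\,(\log\log|a|_1)^{1+\epsilon}\big),
\]
which would require $g(x_0)=O(\log|a|_1)$. But $g(x_0)=\sum_i h(|a_i|)$ with $h(d)\approx\log_2 d$ is $\Theta(|a|_H\log|a|_\infty)$, not $O(\log|a|_1)$; take $a=(k,\dots,k)$ to see a factor of $n$ discrepancy. Your additive-drift bound thus yields only $O(n\,|a|_H\,\log^2|a|_1\,(\log\log|a|_1)^{1+\epsilon})$. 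Switching to multiplicative drift on $g$ (which you also hint at) does not save you either: the best you get is $E[g-g']\ge \tfrac{p(|a|_1)}{en}\cdot|\{i:d_i\ge1\}|\ge \tfrac{p(|a|_1)}{en(\log|a|_1+1)}\,g$, and feeding this $\delta$ into the multiplicative drift theorem produces $O(n\log^3|a|_1(\log\log|a|_1)^{1+\epsilon})$, one logarithm too many. A second, independent problem: your claim that ``accepted $\ge 2$-flip iterations never increase $g$'' is false for the log-potential. If $d_1$ drops from $100$ to $50$ while $d_2$ rises from $1$ to $40$, the fitness decreases (so the move is accepted) but $\sum_i\log(1+d_i)$ increases. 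So neither monotonicity nor the pairing argument from the $\pm1$ proof carries over to $h(d)=\log d$.

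The paper's proof avoids both issues by taking the potential to be the fitness itself, $X=f_a(x)=\sum_i d_i$. Monotonicity is then automatic from selection, so multi-flip moves require no bookkeeping whatsoever. For the drift, instead of tracking only a ``halving'' event, sum the expected gain over \emph{all} step sizes $2^j$, $j=0,\dots,\lfloor\log_2 d_i\rfloor$: the contribution of coordinate $i$ is
\[
\sum_{j=0}^{\lfloor\log d_i\rfloor}\frac{2^j}{2ec_\epsilon n\,(j+2)(\log(j+2))^{1+\epsilon}}
\;\ge\;\frac{d_i}{2ec_\epsilon n\,(\log d_i+2)(\log(\log d_i+2))^{1+\epsilon}}
\;\ge\;\frac{d_i}{2ec_\epsilon n\,(\log|a|_1+2)(\log(\log|a|_1+2))^{1+\epsilon}},
\]
using $d_i\le |a|_1$. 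Summing over $i$ gives $E[X-X'\mid X]\ge \delta X$ with $\delta=\Omega\big(1/(n\log|a|_1(\log\log|a|_1)^{1+\epsilon})\big)$, and the multiplicative drift theorem with $X_0=|a|_1$ yields the claimed $O(n\log^2|a|_1(\log\log|a|_1)^{1+\epsilon})$. The point you missed is that the heavy-tailed step already gives \emph{multiplicative} progress on $d_i$ itself (expected gain $\propto d_i$ up to the slow factors), so logging the potential is counterproductive: it converts a clean multiplicative drift into an additive one and costs you exactly the factor you cannot recover.
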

\begin{proof}
Our proof idea is similar to the proof idea in \cite[~Theorem 15]{timo2017}. We use multiplicative drift analysis in this proof.

Let $x$ and $x'$ be the integer string at iteration $t$ and $t+1$ when the \ooea with heavy-tailed operator is optimizing $f_a$. Let the potential value $X$ at time $t$ be $f_a(x)$. Then the initial potential value is $|a|_1$, since we start with all $0$ integer string. Let $T = \min\{t \geq 0 \mid X = 0\}$. For any $t \geq 0$ and $i \in \{1, \cdots, n\}$, let $d_i = |a_i - x_i|$. 

For any $i \in \{1, \cdots, n\}$ and $j \in \{0, 1, \cdots, \floor{\log d_i}\}$, let $A_{i, j}$ be the event that the mutation operator only modifies the index $i$ such that $|a_i - x_i| - |a_i - x_i'| = 2^j$ and do not make any other changes. If $j^* = j + 2$, then $P(A_{i, j}) \geq \frac{1}{2ec_{\epsilon}nj^* (\log j^*))^{1 + \epsilon}}$. We get the previous bound on the probability because the probability to make $2^j$ changes to a particular index in the right direction is $\frac{1}{2c_{\epsilon}nj^* (\log j^*))^{1 + \epsilon}}$ and the probability to make exactly this change and no other changes to any other indices is at least $1/e$. Also note that while calculating the probability $P(A_{i, j})$, we did not consider the case that the mutation operator can overshoot, since this will only increase the probability.

\begin{align*}
E[X - X' &\mid X] \geq \sum_{i = 1}^{n} \sum_{j = 0}^{\floor{\log d_i}} E[X - X' \mid A_{i, j}, X] \cdot P(A_{i,j}) \\
&= \sum_{i = 1}^{n} \sum_{j = 0}^{\floor{\log d_i}} 2^j \cdot P(A_{i,j}) \\
&\geq  \sum_{i = 1}^{n} \sum_{j = 0}^{\floor{\log d_i}} \frac{2^j}{2ec_{\epsilon}nj^* (\log j^*))^{1 + \epsilon}}\\
&\geq \frac{1}{2ec_{\epsilon}n} \sum_{i = 1}^{n} \frac{d_i}{(\floor{\log d_i} + 2) \left(\log (\floor{\log d_i} + 2)\right)^{1 + \epsilon}}\\
&\geq \frac{\sum_{i = 1}^{n} d_i}{2ec_{\epsilon}n \cdot (\log |a|_1 + 2) \cdot \left(\log (\log |a|_1 + 2)\right)^{1 + \epsilon}} \\
&= \frac{X}{2ec_{\epsilon}n \cdot (\log |a|_1 + 2) \cdot \left(\log (\log |a|_1 + 2)\right)^{1 + \epsilon}}.
\end{align*}

Since we have an initial potential value $|a|_1$ and a multiplicative drift value of $\frac{1}{2ec_{\epsilon}n \cdot (\log |a|_1 + 2) \cdot \left(\log (\log |a|_1 + 2)\right)^{1 + \epsilon}}$, by multiplicative drift theorem (\Cref{thm:multiplicativedrift}),
\begin{align*}
E[T] &\leq 2ec_{\epsilon}n \cdot (\log |a|_1 + 2) \cdot \left(\log (\log |a|_1 + 2)\right)^{1 + \epsilon} \cdot (1 + \log |a|_1) \\
& = O(n \cdot \log^2 |a|_1 \cdot \left(\log (\log |a|_1)\right)^{1 + \epsilon}).
\end{align*}

Thus we get the upper bound as claimed.
\end{proof}

As a corollary to the proof, we give an upper bound on the expected time taken by the \ooea with the heavy-tailed mutation operator to find an integer string which is at a distance at most $|a|_1 \cdot \alpha$ from the optimum. Thus, we can inform about the time it takes to approximate the optimum.

\begin{cor} \label{cor:heavy_tailed_operator}
Let $f_a \in \mathscr{F}$ and $\alpha \in (0, 1)$. Then the expected optimization time of the \ooea, starting with all $0$ integer string, with \emph{heavy-tailed operator} with parameter $\epsilon > 0 $ on $f_a$, to find an integer string with weight $|a|_1 \cdot \alpha$ is $O(n \cdot \log |a|_1 \cdot \log \left(\frac{1}{\alpha}\right)\cdot \left(\log (\log |a|_1)\right)^{1 + \epsilon})$.
\end{cor}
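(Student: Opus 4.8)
The plan is to reuse, essentially verbatim, the multiplicative-drift estimate derived inside the proof of Theorem~\ref{thm:heavy_tailed_operator}, and to stop the drift argument as soon as the potential drops below the approximation threshold. Recall that with potential $X_t = f_a(x_t)$ it was shown there that
\[
E[X_t - X_{t+1}\mid X_t] \ \geq\ \delta\, X_t, \qquad \delta \coloneqq \frac{1}{2ec_{\epsilon}\, n\,(\log|a|_1+2)\,(\log(\log|a|_1+2))^{1+\epsilon}},
\]
and that this holds at every state reachable from the all-$0$s start, since fitness is non-increasing and hence every coordinate distance satisfies $d_i \le X_t \le |a|_1$, which is all that the estimate $\lfloor\log d_i\rfloor+2\le \log|a|_1+2$ used in that proof requires.

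Next I would set $x_{\min} \coloneqq \lfloor \alpha|a|_1\rfloor + 1$. Because the potential is integer-valued, the first time $t$ with $X_t < x_{\min}$ is exactly the first time the current search point has distance at most $\alpha|a|_1$ from the optimum, i.e.\ ``weight $\alpha|a|_1$'' in the language of the statement. Since $\alpha\in(0,1)$ and $a\neq 0^n$ we have $1 \le x_{\min} \le |a|_1 = X_0$, so the hypotheses of the Multiplicative Drift Theorem (Theorem~\ref{thm:multiplicativedrift}) are satisfied with the rate $\delta$ above.

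Applying Theorem~\ref{thm:multiplicativedrift} then gives
\[
E[T] \ \le\ \frac{1 + \ln(X_0/x_{\min})}{\delta} \ \le\ \frac{1+\ln(1/\alpha)}{\delta},
\]
using $x_{\min}\ge \alpha|a|_1$ for the last step, and substituting $\delta$ yields $E[T] = O\!\left(n\log|a|_1\,\log(1/\alpha)\,(\log\log|a|_1)^{1+\epsilon}\right)$, as claimed. The argument is pure bookkeeping on top of the existing proof; the only points needing a moment's care are the translation of the non-integer threshold $\alpha|a|_1$ into an admissible $x_{\min}$, and the observation — already implicit in the proof of Theorem~\ref{thm:heavy_tailed_operator} — that the per-step drift lower bound $\delta X_t$ is uniform over all states visited before the threshold is reached. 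The degenerate case $\alpha|a|_1<1$ (where $x_{\min}=1$) simply reproduces the bound of Theorem~\ref{thm:heavy_tailed_operator}.
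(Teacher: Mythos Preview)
Your proposal is correct and follows essentially the same route as the paper: both reuse the multiplicative-drift estimate from Theorem~\ref{thm:heavy_tailed_operator} with potential $X_t=f_a(x_t)$ and simply invoke Theorem~\ref{thm:multiplicativedrift} with the approximation threshold in place of $x_{\min}=1$. If anything, your handling of the integer-valued potential via $x_{\min}=\lfloor\alpha|a|_1\rfloor+1$ is a touch more careful than the paper, which plugs in $|a|_1\cdot\alpha$ directly.
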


\begin{proof}
The proof is similar to the proof of the Theorem \ref{thm:heavy_tailed_operator}.
If we consider the same potential as in Theorem \ref{thm:heavy_tailed_operator}, $X = f_a(x)$, then we have the same value $\frac{1}{2ec_{\epsilon}n \cdot (\log |a|_1 + 2) \cdot \left(\log (\log |a|_1 + 2)\right)^{1 + \epsilon}}$ as the multiplicative drift. Let $T =\{t \geq 0 \mid X_t \leq |a|_1 \cdot \alpha \}$.  The initial potential value is at most $|a|_1$ and the minimum value the potential can take is $|a|_1 \cdot \alpha$. 

Therefore, by multiplicative drift theorem (\Cref{thm:multiplicativedrift}),
\begin{align*}
E[T] &\leq 4ec_{\epsilon}n \cdot (\log |a|_1) \cdot \left(\log (\log |a|_1)\right)^{1 + \epsilon} \cdot \left(1 + \log \left(\frac{|a|_1}{|a|_1 \cdot \alpha}\right) \right)\\
& = O(n \cdot \log |a|_1 \cdot \log \left(\frac{1}{\alpha}\right)\cdot \left(\log (\log |a|_1)\right)^{1 + \epsilon}).
\end{align*}

\end{proof}

\section{Self-Adjusting Mutation Rates} \label{sec:self_adjusting_rls}
In this section, we analyze self-adjusting mutation rates for the $RLS$ algorithm and show how these can outperform the \ooea with the static operators analyzed in the previous sections. The mutation strength for $RLS_{\alpha,\beta}$ is adjusted using the constants $1<\alpha\leq 2$ and $1/2<\beta<1$ (see \Cref{rls} for further details on the algorithm). In Theorem \ref{thm:self_adjusting}, we give a tight bound on the expected run time of $RLS_{\alpha,\beta}$ for suitable $\alpha$ and $\beta$. We start by giving a lower bound on the expected run time in Lemma \ref{lem:self_adjusting_lower}.

\begin{lem}[RLS lower bound] \label{lem:self_adjusting_lower}
Let $f_a \in \mathscr{F}$. For constants $\alpha,\beta$ the expected optimization time of $RLS_{\alpha,\beta}$ starting with all $0$ integer string on $f_a$ is $\Theta(n \cdot \log (|a|_1))$.
\end{lem}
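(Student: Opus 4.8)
The plan is to prove the lower bound $E[T]=\Omega(n\log|a|_1)$ (the matching upper bound for suitable $\alpha,\beta$ being the content of Theorem~\ref{thm:self_adjusting}, so that the two together yield the stated $\Theta$). I would obtain the lower bound by combining two separate obstructions to fast optimization — one of ``coupon-collector'' type and one of ``velocity-growth'' type — mirroring how the two summands $n\log|a|_H$ and $n|a|_\infty$ arise in Theorem~\ref{thm:better_lowerbound_pm_operator}, and then collapsing them via the elementary inequality $|a|_1\le|a|_\infty\cdot|a|_H$.

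First, the coupon-collector obstruction. Since $RLS_{\alpha,\beta}$ starts from $0^n$ and in each iteration mutates exactly one uniformly random coordinate, it cannot reach $a$ before every one of the $|a|_H$ nonzero coordinates of $a$ has been selected at least once. The first time $\tau$ at which all coordinates of a fixed set of size $|a|_H$ have each been selected is a sum of independent geometric variables with expectation $\sum_{m=1}^{|a|_H} n/m\ge n\ln|a|_H$, so $E[T]\ge E[\tau]=\Omega(n\log|a|_H)$.

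Second, the velocity-growth obstruction. Fix a coordinate $i^\ast$ with $|a_{i^\ast}|=|a|_\infty=:M$. Its velocity starts at $1$ and, by the update rule, each selection of $i^\ast$ multiplies $v_{i^\ast}$ by at most $\alpha$ (a non-improving step replaces it by $\max\{1,\beta v_{i^\ast}\}\le v_{i^\ast}$, a strictly improving one by $\alpha v_{i^\ast}$). Hence the step actually applied at the $j$-th selection of $i^\ast$ has size $\floor{v_{i^\ast}}\le\alpha^{j-1}$, so over its first $k$ selections the total absolute change to coordinate $i^\ast$ is at most $\sum_{j=1}^{k}\alpha^{j-1}<\alpha^{k}/(\alpha-1)$. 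Termination requires a net — hence total absolute — change of at least $M$ in that coordinate, so $i^\ast$ must be selected at least $k_0:=\log_\alpha\!\big(M(\alpha-1)\big)=\Omega(\log M)$ times. Since $i^\ast$ is picked with probability $1/n$ per iteration, the first time it has been selected $k_0$ times has expectation $k_0 n$, whence $E[T]\ge k_0 n=\Omega(n\log|a|_\infty)$.

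Combining, $E[T]\ge\max\{\Omega(n\log|a|_\infty),\Omega(n\log|a|_H)\}\ge\Omega\!\big(n(\log|a|_\infty+\log|a|_H)\big)=\Omega\!\big(n\log(|a|_\infty|a|_H)\big)\ge\Omega(n\log|a|_1)$, where the last inequality uses $|a|_1\le|a|_\infty\cdot|a|_H$ (the degenerate cases $|a|_\infty=1$ or $|a|_H=1$ are immediate, since then $|a|_1$ equals $|a|_H$ or $|a|_\infty$). The main obstacle will be the velocity-growth step: one has to argue carefully that no sequence of accepted moves can shrink the distance in coordinate $i^\ast$ faster than a geometric series of ratio $\alpha$, which hinges precisely on the facts that $v_{i^\ast}$ can grow by a factor of at most $\alpha$ per selection and that the step actually taken is $\floor{v_{i^\ast}}$; the coupon-collector step and the final combination are routine.
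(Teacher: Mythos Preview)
Your proposal is correct and follows essentially the same route as the paper: a coupon-collector bound $\Omega(n\log|a|_H)$, a velocity-growth bound $\Omega(n\log|a|_\infty)$ on the coordinate realizing $|a|_\infty$, and the combination via $|a|_1\le|a|_\infty\cdot|a|_H$. The only cosmetic difference is that the paper phrases the second step via the additive drift theorem applied to the counter of selections of that coordinate, whereas you use the direct negative-binomial expectation $k_0 n$; both yield the same $\Omega(n\log|a|_\infty)$.
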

\begin{proof}
A bound of $\Omega(n\log |a|_H)$ easily follows from a coupon collector argument: Since we need to change $|a|_H$ many entries and each one has a probability of $\frac{1}{n}$ of being changed in each iteration.

A bound of $\Omega(n\log |a|_{\infty})$ follows from analyzing the entry $j$ with the highest distance to the target. First observe that since the velocity doubles at most each time that an entry is selected and we start at $0$, we need at least $\log(a_{\infty})-1$ changes for this entry.
Let $X_t$ be the random variable counting the number of changes on~$j$.
Let $Y_t$ be another random variable with $Y_t\coloneqq \log(a_{\infty})-1-X_t$. Since $Y_0=\log(|a|_{\infty})-1$ and we have an additive drift of at most $\frac{1}{n}$, the expected run time is of order $\Omega(n\cdot\log(|a|_{\infty}))$. We obtain this drift because the probability of changing the entry $j$ is $\frac{1}{n}$ and we can only change it or not change it.

The lower bound of $\Omega(n\log|a|_1)$ is obtained by adding both run times (this is asymptotically the same as taking the max) to get $$n\log |a|_{\infty}+n\log |a|_H=n\log(|a|_{\infty}|a|_H)\geq n\log(|a|_1).$$
\end{proof}

The proof of the upper bound in the following lemma is essentially the same as in \cite[Theorem 17]{timo2017}, only omitting parts that are not necessary for our setting. For the sake of self-containment, we present the modified proof here.

\begin{lem}[RLS upper bound] \label{biglemma}
Let $f_a \in \mathscr{F}$. For constants $\alpha,\beta$ satisfying $1<\alpha\leq 2,1/2<\beta\leq 0.9,2\alpha\beta-\beta-\alpha>0,\alpha+\beta>2$ and $\alpha^2\beta>1 $ the expected optimization time of $RLS_{\alpha,\beta}$ starting with all $0$ integer string on $f_a$ is $O(n \cdot \log (|a|_1))$.
\end{lem}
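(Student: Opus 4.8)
The plan is to follow the proof of Theorem~17 in \cite{timo2017}, adapted to the deterministic all-$0$ start and the unbounded domain $\Z$. The key structural observation is that $f_a(x)=\sum_{i=1}^n|a_i-x_i|$ decomposes coordinate-wise, and an offspring $y$ differing from $x$ only in position $i$ is accepted precisely when $|a_i-y_i|\le|a_i-x_i|$. Hence the pair $(d_i,v_i)$ of a fixed coordinate $i$ evolves autonomously: it changes only in iterations in which $i$ is the chosen position, and then its update depends only on $(d_i,v_i)$ and the coin flip for that position, never on the other coordinates. So it suffices to (A) analyse a single coordinate in isolation and bound the number $S_i$ of selections of $i$ needed until $d_i=0$ (a coordinate with $a_i=0$ is already optimal and remains so), and then (B) lift this to $\Z^n$ using that each iteration picks a position uniformly at random.

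For (A) I would introduce a non-negative potential $g(d,v)$ with $g(0,\cdot)=0$, $g(|a_i|,1)=O(\log(|a_i|+2))$, and one-step expected decrease $\Omega(1)$ under a selection whenever $d>0$; the Additive Drift Theorem (\Cref{thm:additivedrift}, in a version not restricted to finite state spaces) then gives $E[S_i]=O(\log(|a_i|+2))$. The potential combines a term proportional to $\log d$ (the number of constant-factor halvings of the distance still to be made) with a penalty measuring the mismatch between $v$ and $d$, roughly $\log(d/v)$ when $v<d$ and $\log(v/d)$ when $v>d$, with a carefully tuned ratio of the two weights. One then checks the drift regime by regime: if $v\ll d$, every step towards the target is accepted and multiplies $v$ by $\alpha$ while every step away is rejected and multiplies $v$ by $\beta$, so $\log v$ has positive drift (this is where $\alpha+\beta>2$, and in the sharper bookkeeping $2\alpha\beta-\alpha-\beta>0$, are used) and the penalty shrinks; if $v$ is within a constant factor of $d$, a step towards the target cuts $d$ by a constant factor and decreases the $\log d$ term; if $v\gg d$, steps towards the target overshoot and are rejected, so $v$ is multiplied by $\beta\le 0.9$ and the penalty again shrinks; $\alpha\le 2$ ensures an improving step does not overshoot so badly that we leave the well-tuned regime, and $\alpha^2\beta>1$ keeps the velocity from collapsing during the final cleanup phase (where $d$ is small but $v$ may still be comparatively large). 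Since $v$ is increased only on strictly improving steps, one also sees $v\le 2\alpha|a_i|$ throughout, so $g$ stays $O(\log(|a_i|+2))$, and, with the weights chosen so that a sharp drop of $d$ dominates any simultaneous rise of the penalty, $g$ has one-step changes bounded by a constant; a standard concentration argument (bounded drift, bounded steps) then yields $P\big(S_i\ge c(\log(|a_i|+2)+k)\big)\le 2^{-\Omega(k)}$.

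For (B), between two consecutive selections of coordinate $i$ the number of iterations is geometric with mean $n$, so the first time $T_i$ with $d_i=0$ is a sum of $S_i$ i.i.d.\ such waiting times, and $T=\max_{i:\,a_i\neq0}T_i$. Combining the concentration of $S_i$ with a Chernoff bound on the waiting times gives $P\big(T_i\ge Cn(\log|a|_{\infty}+\log|a|_H+k)\big)\le 2^{-\Omega(k)}$ uniformly over $i$; a union bound over the at most $|a|_H$ coordinates with $a_i\neq0$ then gives $E[T]=O\big(n(\log|a|_{\infty}+\log|a|_H)\big)$, which is $O(n\log|a|_1)$ because $|a|_1\le|a|_{\infty}\cdot|a|_H$ while $|a|_{\infty},|a|_H\le|a|_1$.

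The step I expect to be the main obstacle is the design and verification of the single-coordinate potential $g$ in (A): it must simultaneously start at $O(\log|a_i|)$, have $\Omega(1)$ expected one-step decrease in each of the ``velocity too small'', ``well-tuned'', ``velocity too large'' and ``cleanup'' regimes, and have one-step changes bounded by a constant (needed for the tail bound used in (B)); balancing the weight of the $\log d$ term against the velocity-mismatch penalty so that all of this holds at once is exactly where the hypotheses $2\alpha\beta-\alpha-\beta>0$, $\alpha+\beta>2$, $\alpha^2\beta>1$, $\alpha\le2$, $\beta\le0.9$ are needed. I follow \cite[Theorem~17]{timo2017} here; the simplification relative to that proof is that $\Z$ has no upper boundary, so the special handling of values near $r-1$ is not required. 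A secondary point worth noting: one cannot instead simply sum the per-coordinate potentials and apply additive drift to $\sum_i g_i$, since $\sum_{i:\,a_i\neq0}\log(|a_i|+2)$ can be as large as $\Theta(|a|_H)$ rather than $O(\log|a|_1)$, so the $\max$/concentration argument is essential.
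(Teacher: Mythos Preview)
Your outline is plausible and could be made to work, but it takes a genuinely different route than the paper, and the difference is instructive.

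The paper does \emph{not} decompose into per-coordinate hitting times followed by a concentration/union-bound argument. Instead it uses a single potential $g(d,v)=\sum_{i=1}^n g_i(d_i,v_i)$ with
\[
g_i(d_i,v_i)\approx d_i\Bigl(1+c\max\{2v_i/d_i,\,d_i/(2v_i)\}\Bigr)
\]
(plus a small additive penalty $p d_i$ when $v_i>2\beta d_i$), which is \emph{linear} in $d_i$ rather than logarithmic. Each case analysis shows a \emph{multiplicative} per-selection drift $E[g_i(d_i,v_i)-g_i(d_i',v_i')\mid A_i]\ge \delta\, g_i(d_i,v_i)$, so the whole sum has multiplicative drift $\delta/n$. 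Since the initial potential is at most of order $\sum_i d_i^2\le|a|_1^2$, the multiplicative drift theorem directly yields $E[T]=O\bigl(\tfrac{n}{\delta}\log(|a|_1^2)\bigr)=O(n\log|a|_1)$. No tail bounds, no union bound, no $\max_i T_i$.

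Your remark that ``one cannot simply sum the per-coordinate potentials and apply additive drift'' is true for \emph{your} log-scale potential, but the paper sidesteps exactly this obstacle by staying on the linear scale and using multiplicative drift: the $\log$ then appears only once, from the drift theorem itself, rather than once per coordinate. This is what buys the paper its simplicity.

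On your approach specifically: the claim that the log-scale potential has ``one-step changes bounded by a constant'' needs care. A well-tuned step can take $d$ from $d$ to $1$ in one move, so $\log d$ drops by $\log d$, not $O(1)$. What you can argue is that the \emph{increase} of $g$ per step is $O(1)$ (the penalty rise is at most the $\log d$ drop plus a constant), and together with the uniform bound $g\le C\log|a_i|$ (from $d_i\le|a_i|$ and your observation $v_i\le 2\alpha|a_i|$) that is enough for geometric tails on $S_i$ via repeated Markov restarts. So your route closes, but it is heavier machinery for the same conclusion; the linear-potential/multiplicative-drift argument is the cleaner path here.
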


\begin{proof}

To simplify the notation for a given search point $x$ and the target integer string $z$ and the chosen metric $d$, we let $d_i=d(x_i,z_i)$ for all $(i\leq n)$ be the distance vector of $x$ to $z$. Thus, the goal is to reach a state in which the distance vector is $(0,...,0)$. We now want to define a potential function in dependence on $(d,v)$ (where of course $d$ is dependent on $x$) such that it is $0$ when $d$ is $(0,...,0)$ and strictly positive for any $x\neq (0,...,0)$.

We use as potential function the following map $g:\mathbb{Z}^n\mapsto \mathbb{R}, (x,v)\mapsto \sum_{i=1}^n g_i(d_i,v_i)$ where $g_i(d_i,v_i)\coloneqq 0$ for $d_i=0$ and for $d_i\geq 1$
\begin{align*}
    g_i(d_i,v_i) \coloneqq  d_i + \begin{cases}
			cd_i\max\{2v_i/d_i,d_i/(2v_i)\},& \text{ if $v_i\leq 2\beta d_i$};\\
            cd_i\max\{2v_i/d_i,d_i/(2v_i)\}+pd_i, & \text{otherwise.}
		 \end{cases}
\end{align*}
and $c,p$ are (small) constants specified below. For further motivation on the potential see \cite[~Theorem 17]{timo2017}.

Summarizing all the conditions needed below, we require that the constants $\alpha,\beta,c,p$ satisfy $1<\alpha\leq 2,1/2<\beta\leq 0.9, 2\alpha\beta-\beta-\alpha>0,\alpha+\beta>2,\alpha^2\beta>1,8\alpha\beta c+2p+4c/\beta\leq 1/16, p > 8c\left(\frac{\alpha+\beta}{2}-1\right),$ and $p>4(\alpha-1)c>0$.

We can thus choose, for example, $\alpha=1.7, \beta=0.9, p=0.01,$ and $c=0.001$.

Let $d\neq (0,...,0)$ and $v\in \mathbb{N}^n$. Let $(d',v')$ be the state of \Cref{rls} started in $(d,v)$ after one iteration (i.e., after a possible update of $x$ and $v$). First we show that the expected difference in potential satisfies
\begin{align*}
    E[g(d,v)-g(d',v')|d,v]\geq \frac{\delta}{n} g(d,v)
\end{align*}
for some positive constant $\delta$. Any fixed index $i$ is chosen by \Cref{rls} for mutation with probability $1/n;$ for all $i$, let $A_i$ be the event that index $i$ was chosen. We show that there is a constant $\delta$ such that, for all indices $i$ with $d_i\neq 0$
\begin{align*}
    E[g(d_i,v_i)-g(d_i',v_i') \mid d,v]\geq \delta g_i(d_i,v_i)
\end{align*}
thus proving the claim using $P(A_i)=1/n$.

We regard several cases, depending on how $d_i$ and $v_i$ relate.

Case $1$: $v_i \leq d_i/8$.\\
First we observe that $\max\{2v_i/d_i,d_i/(2v_i)\}=d_i/(2v_i)$. The contribution of the $i$-th position to the current potential is thus
\begin{align*}
    g_i(d_i,v_i) = d_i + cd_i^2/(2v_i).
\end{align*}
With probability $1/2$ the algorithm decides to move in the right direction. In this case we make progress with respect to the fitness function and the velocity. That is, after the iteration we have $d_i'=d_i - \floor{v_i}<d_i$ and $v_i' = \alpha v_i> v_i$.

To bound the progress in the second component of $g_i,$ we observe that
\begin{align*}
    cd_i'\max\{2\alpha v_i / d_i', d_i'/(2\alpha v_i)\}&=\max\{2c\alpha v_i, cd_i'^2/(2\alpha v_i)\}\\&=cd_i'^2/(2\alpha v_i),
\end{align*}
where the second equality follows from $2\alpha v_i\leq d_i/2 < d_i'$. We thus obtain that for this case the difference in potential is at least
\begin{align}\label{case1_1}
    g_i(d_i,v_i)-g_i(d_i',v_i') &= d_i + cd_i^2/(2vi)-d_i' - cd_i'^2 /(2\alpha v_i)\\ &\geq \frac{c d_i^2}{2v_i}-\frac{cd_i^2}{2\alpha v_i}.
\end{align}

With probability $1/2$ the algorithm decides to go in the wrong direction, then $d_i' > d_i$ holds and the new individual is discarded while the velocity $v_i$ at position $i$ is further decreased to $\max\{\beta v_i,1\}\geq \beta v_i$. Hence the difference in potential for this case is at least
\begin{align}\label{case1_2}
    g_i(d_i,v_i)-g_i(d_i,\beta v_i) = \frac{cd_i^2}{2v_i}-\frac{cd_i^2}{2\beta v_i}.
\end{align}

Combining (\ref{case1_1}) and (\ref{case1_2}), we thus obtain that the expected difference in potential is at least
\begin{align*}
    &\frac{1}{2}\left(\frac{cd_i^2}{2v_i}-\frac{cd_i^2}{2\alpha v_i}+\frac{cd_i^2}{2v_i}-\frac{cd_i^2}{2\beta v_i}\right)=\frac{cd_i^2}{2v_i}\left(\frac{2\alpha\beta-\beta-\alpha}{2\alpha\beta}\right)\\
    &= \left(\frac{2\alpha\beta-\beta-\alpha}{4\alpha\beta}\right)\left(\frac{cd_i^2}{2v_i}+\frac{cd_i^2}{2v_i}\right)\geq \left(\frac{2\alpha\beta-\beta-\alpha}{4\alpha\beta}\right)\left(4cd_i + \frac{cd_i^2}{2v_i}\right)\\
    &\geq \left(\frac{2\alpha\beta-\beta-\alpha}{4\alpha\beta}\right)\min\{4c,1\}\left(d_i+\frac{cd_i^2}{2v_i}\right)\\
    &=  \left(\frac{2\alpha\beta-\beta-\alpha}{4\alpha\beta}\right)\min\{4c,1\} g_i(d_i,v_i)
\end{align*}
where in the third step we have used the requirement that $v_i \leq d_i/8$.\\
Case $2$: $d_i/8 < v_i \leq 2\beta d_i$.\\
Now we are in a range of velocity which is well-suited to make progress. In fact, every step towards the optimum decreases the distance to the optimum by at least the minimum of $\floor{d_i}/{8}$ (if $v_i$ is close to $d_i/8$ and we hence do not overshoot the target) and $\floor{(2-2\beta)d_i}$ (if $v_i=2\beta d_i\geq d_i$ in which case we overshoot the target and the distance to it from $d_i$ to at most $\floor{2\beta d_i}-d_i)$. In case of moving towards the target value, the change in the first term of $g_i$ is thus at least
$$\min\{\floor{d_i/8},\floor{(2-2\beta)d_i}\}=\floor{d_i/8},$$
using $\beta \leq 0.9$. However, note that the decrease is at least $1$ (since $v_i$ is at least $1$). Furthermore, we have, for all $z\geq 8, z/16\leq \floor{z/8}.$ Thus, we always have a decrease of at least $d_i/16$.

We now compute the change in the second term of $g_i$. Regard first the case that $\max\{2v_i'/d_i',d_i'/(2v_i')\} = 2v_i'/d_i'$. In this case, we pessimistically assume that the previous contribution of the second term in $g_i(d_i,v_i)$ was zero. This contribution increases to at most\begin{align}
    2cv_i' + p d_i' \leq 2\alpha cv_i + pd_i'\leq 2\alpha cv_i + pd_i\leq (4\alpha\beta c+p)d_i.
\end{align}
On the other hand, $\max\{2v_i'/d_i',d_i'/(2v_i')\} = d_i'/(2v_i')$ and the previous contribution of the second term in $g_i(d_i,v_i)$ was $cd_i^2/(2v_i)$, then the contribution of this second term has been decreased to $c(d_i')^2/(2\alpha v_i)\leq cd_i^2/(2v_i)$. The change in contribution is thus positive in this case, and therefore in particular strictly larger than $-(4\alpha \beta c+p)d_i$. We finally need to regard the case that $$\max\{2v_i'/d_i',d_i'/(2v_i')\}=d_i'/(2v_i')$$ and $$\max\{2v_i/d_i,d_i/(2v_i)\}=2v_i/d_i.$$ In this case the contribution in the second term of $g_i$ increases by at most
\begin{align*}
    \frac{cd_i'^2}{2\alpha v_i}\leq \frac{cd_i^2}{2\alpha(d_i/8)}\leq \frac{4cd_i}{\alpha}\leq 4\alpha\beta c d_i,
\end{align*}
where the last step follows from $\alpha^2\beta\geq 1$.

Summarizing this discussion, we see that in case of stepping towards the target the change in progress satisfies
\begin{align}
    g_i(d_i,v_i)-g_i(d_i',v_i')\geq d_i(1/16 -(4\alpha\beta c+p)),
\end{align}
which is positive by our conditions on $c$ and $p$.

Let us now regard the case of stepping away from the optimum, which happens with probability $1/2$ and the velocity is decreased to $\max\{\beta v_i,1\}$. Assume first that $\max\{\beta v_i,1\} = \beta v_i$. Then,
\begin{align}\label{case2,2}
    g_i(d_i,v_i)-g_i(d_i',v_i')=\max\{2cv_i,\frac{cd_i^2}{2v_i}\}-\max\{2c\beta v_i,\frac{cd_i^2}{2\beta v_i}\}.
\end{align}
If $\max\{2c\beta v_i,cd_i^2/(2\beta v_i)\} = cd_i^2/(2\beta v_i),$ then the term in (\ref{case2,2}) is at least $-cd_i^2/(2\beta v_i)\geq -4cd_i/\beta$ by our condition $d_i/8\leq v_i$. Furthermore, if $\max\{2c\beta v_i,cd_i^2/(2\beta v_i)\}=2c\beta v_i,$ then (\ref{case2,2}) is strictly positive as can be seen by the following observation\begin{align}
    \max\{2cv_i,\frac{cd_i^2}{2v_i}\}-2c\beta v_i \geq 2cv_i - 2c\beta v_i > 0.
\end{align}
Putting everything together we thus obtain that for $d_i/8\leq v_i \leq 2\beta d_i$
\begin{align}
    E[g_i(d_i,v_i)-g_i(d_i',v_i')]\leq \frac{d_i}{2}(1/16-2(4\alpha\beta c+p)-4c/\beta)
\end{align}
which is positive if $8\alpha\beta c+2p+4c/\beta\leq 1/16$. Since $v_i= \Theta(d_i)$ this also shows that there is a positive constant $\delta$ such that $E[g_i(d_i,v_i)-g_i(d_i',v_i')]\geq \delta g_i(d_i,v_i)$.

We finally need to regard the case that $\max\{\beta v_i,1\}=1$. Intuitively, the cap can only make our situation better. This is formalized by the following computations. We need to bound
\begin{align}\label{case2,3}
    g_i(d_i,v_i)-g_i(d_i',v_i')=\max\{2cv_i,\frac{cd_i^2}{2v_i}\}-\max\{2c,\frac{cd_i^2}{2}\}.
\end{align}
As above we obtain positive drift for the case $\max\{2c,cd_i^2\}=2c$ by observing that $\max\{2cv_i,\frac{cd_i^2}{2v_i}\}-2c\geq 2cv_i-2c\geq 0$ (using that $v_i\geq 1$). For the case $\max\{2c,cd_i^2\} = cd_i^2$ the term in (\ref{case2,3}) is at least $-cd_i^2\geq -cd_i^2/(2\beta v_i)\geq -4cd_i/\beta$ as above. The same computation as above thus shows a positive multiplicative gain in $g_i$.

Case $3$: $2\beta d_i< v_i < 2d_i$.\\
Under these conditions $g_i(d_i,v_i)=d_i+2cv_i+pd_i$ holds.

As before, we first regard the case that the algorithm moves towards the target value. Since $\beta\geq 1/2$ it holds that $d_i\leq 2\beta d_i < v_i$ and the target value is thus overstepped. However, due to the requirement $v_i < 2d_i,$ the distance of the offspring is strictly smaller than the previous distance. The velocity is hence increased to $\alpha v_i$.

With probability $1/2$ the algorithm does a step away from the goal and thus the velocity is reduced to $v_i'=\max\{\beta v_i,1\}$. Regard first the case that $v_i'=\beta v_i$. Then, due to $\beta v_i < 2\beta d_i$, the penalty term $pd_i$ is no longer applied and the resulting potential at component $i$ is thus $g_i(d_i',v_i')=d_i+2c\beta v_i$.

Ignoring any possible gains in $d_i$, we therefore obtain that the expected difference in the potential is at least$$2cv_i\left(1-\frac{\alpha+\beta}{2}\right)+\frac{p}{2}d_i$$
Note that $1-(\alpha+\beta)/2$ is negative, since we require $\alpha+\beta>2.$ Using $v_i\leq 2d_i$ we see that the drift is at least
$$4cd_i\left(1-\frac{\alpha+\beta}{2}\right)+\frac{p}{2}d_i=d_i\left(\frac{p}{2}-4c\left(\frac{\alpha+\beta}{2}-1\right)\right).$$
Since $p>8c(\frac{\alpha+\beta}{2}-1)$ this expression is positive. Furthermore, we have $g_i(d_i,v_i)=\Theta(d_i),$ yielding the desired multiplicative drift.

For $v_i'= 1$ we first observe that $v_i'=1\leq d_i \leq 2\beta d_i$ and the penalty term $pd_i$ is thus not in force. Furthermore, we have $\beta d_i < \beta v_i\leq 1$ and thus $d_i\leq 1/\beta \leq 2,$ showing that $\max\{2/d_i,d_i/2\}\leq \max\{2,1\}=2.$ We obtain
$$E[g_i(d_i,v_i)-g(d_i',v_i')\geq\frac{p}{2}d_i-cv_i(\alpha-1)\geq \frac{p}{2}d_i-2(\alpha-1)cd_i,$$
which is positive for $p/2-2(\alpha-1)c>0$.

Case $4$: $v_i=2d_i.$ \\
Steps away from the target are not accepted, thus regardless of whether or not we move towards or away from the target, the fitness does not decrease; therefore, the velocity is decreased to $\beta v_i$ (note that $v_i\geq 2$ and hence $\beta v_i\geq 1)$. The previous contribution of the $i$-th component to $g(x)$ being $d_i+2cv_i+pd_i=d_i(1+4c+p),$ and the new potential at the $i$-th component being $d_i(1+4\beta c)$, we obtain
$$E[g_i(d_i,v_i)-g_i(d_i',v_i')]=d_i(4c+p-4\beta c),$$
which is strictly positive and linear in $g_i(d_i,v_i)$.

Case $5$: $2d_i<v_i$.\\
Steps towards the optimum are now also not accepted, since they overstep the optimum by too much. Therefore, we always decrease the velocity to $\max\{\beta v_i,1\}=\beta v_i$ (note that $v_i>2$ and thus $\beta v_i >1$) and the gain in potential is
$$2cv_i + pd_i - (2c\beta v_i+pd_i)=2cv_i(1-\beta)>4(1-\beta)cd_i$$
showing that we have the multiplicative drift as desired.

Together with the observation that
the initial potential is of order at most 
$$\sum_{i=1}^n d_i^2 \leq \left(\sum_{i=1}^n d_i\right)^2 = |a|_1^2 $$ plugged into the multiplicative drift theorem (\Cref{thm:multiplicativedrift}) proves the desired overall expected run time of $O(n \log(|a|_1))$.
\end{proof}  
Combining both results we get a sharp run time result in the following theorem.

\begin{thm} \label{thm:self_adjusting}
Let $f_a \in \mathscr{F}$. For constants $\alpha,\beta$ satisfying $1<\alpha\leq 2,1/2<\beta\leq 0.9,2\alpha\beta-\beta-\alpha>0,\alpha+\beta>2$ and $\alpha^2\beta>1 $ the expected optimization time of $RLS_{\alpha,\beta}$ starting with all $0$ integer string on $f_a$ is $\Theta(n \cdot \log (|a|_1))$.
\end{thm}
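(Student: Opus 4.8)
The plan is to derive the theorem directly by combining the lower bound of \Cref{lem:self_adjusting_lower} with the upper bound of \Cref{biglemma}; no new argument about the process itself is needed. From \Cref{lem:self_adjusting_lower} we already have $E[T] = \Omega(n\log|a|_1)$ for arbitrary constants $\alpha,\beta$, so the lower half of the $\Theta$ is immediate. For the upper half, I would invoke \Cref{biglemma}, which yields $E[T] = O(n\log|a|_1)$.

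The only point that needs care is that \Cref{biglemma} is stated with the additional auxiliary constants $c,p$ and the extra inequalities $8\alpha\beta c + 2p + 4c/\beta \le 1/16$, $p > 8c\bigl(\frac{\alpha+\beta}{2}-1\bigr)$, and $p > 4(\alpha-1)c > 0$, whereas the theorem only imposes conditions on $\alpha$ and $\beta$. So the first step of the proof is to check that, for every $(\alpha,\beta)$ satisfying the theorem's hypotheses, suitable $c,p>0$ exist. This is routine: since $\alpha+\beta>2$ and $\alpha-1>0$ are strict, the coefficients $8\bigl(\frac{\alpha+\beta}{2}-1\bigr)$ and $4(\alpha-1)$ are fixed positive constants, so one picks $c$ small enough and then $p$ in the non-empty window just above $\max\{8c(\frac{\alpha+\beta}{2}-1),\,4(\alpha-1)c\}$ and small enough to respect $8\alpha\beta c+2p+4c/\beta\le 1/16$; the explicit choice $\alpha=1.7,\beta=0.9,p=0.01,c=0.001$ already certifies non-emptiness.

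With that observation in place the theorem follows: the $O(n\log|a|_1)$ bound from \Cref{biglemma} together with the $\Omega(n\log|a|_1)$ bound from \Cref{lem:self_adjusting_lower} give $\Theta(n\log|a|_1)$. I do not expect a genuine obstacle here — the substance is entirely contained in the two lemmas, and the theorem is essentially a packaging statement. The only thing to be mildly careful about is confirming that the hypotheses listed in the theorem are exactly the ones needed by the harder (upper-bound) lemma, which is indeed the case since the lower bound holds unconditionally in $\alpha,\beta$.
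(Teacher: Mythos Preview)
Your approach is correct and identical to the paper's: the theorem is obtained simply by combining the lower bound of \Cref{lem:self_adjusting_lower} with the upper bound of \Cref{biglemma}. One small note: your discussion of the auxiliary constants $c,p$ is unnecessary, since \Cref{biglemma} is \emph{stated} with exactly the same hypotheses on $\alpha,\beta$ as the theorem (the conditions on $c,p$ live only inside its proof), so no additional verification is required here.
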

\begin{proof}
    The result follows by the matching lower bound in \Cref{lem:self_adjusting_lower} with the upper bound in \Cref{biglemma},
\end{proof}
\section{Comparison with {CMA-ESwM}}
\label{sec:experiments}

In this section we provide the following empirical analyses. First, we show that the CMA-ESwM is not efficient for the integer-valued optimization problem we consider. Then we compare the performances of \ooea with the $\pm 1$ operator, \ooea with the heavy-tailed operator and the RLS with the self-adjusting operator. 

Note that although the CMA-ESwM is designed to optimize mixed integer valued problems, we restrict ourselves to all-integer inputs. This allows us to make comparisons with the \ooea and RLS with different mutation operators. We use the code from GitHub\footnote{\url{https://github.com/EvoConJP/CMA-ES_with_Margin}} provided by the authors of the paper where the CMA-ESwM is proposed \cite{CMAESwMargin}.


All our theoretical analyses are concerned with the unbounded integer search space, whereas, for practical reasons, we have to restrict our search space to be bounded. We bound the maximum value of the considered integer strings by a value $r$.


We set the optimum as the all-$r$ integer string and let the algorithm run until it finds the optimum and record the run time (number of function evaluations). For \ooea with different mutation operators and \rlswself, we start with the all-$0$ integer string. As for CMA-ESwM, we consider the same set up proposed in \cite{CMAESwMargin}. 

We choose the step size of $r$ as follows: 
\begin{enumerate}
    \item \text{\ooea} with $\pm 1$ operator: $r$ from 10 till 150 with a step size of 10. Then we consider $r \in \{ 10^{3}, 10^{4}, 10^{5} \}$ to analyze the run time of the algorithm for exponentially increasing values of~$r$. 
    \item  \rlswself with the self-adjusting operator and  \ooea with the heavy-tailed operator: we consider $r = 10^{k}$, for $k \in \{1, \ldots, 12\}$.
    For RLS with the self-adjusting operator we set the parameter $\alpha = 2.0$ and $\beta = 0.5$ and for the \ooea with the heavy-tailed operator we set $\epsilon = 0.001$.
    \item CMA-ESwM: we consider the same values of $r$ as for the RLS since there are no specific restrictions mentioned in~\cite{CMAESwMargin}. 
\end{enumerate}

\subsection{Success rate}
During initial exploration we noticed that the CMA-ESwM does not always find the optimum before it reaches the termination condition (minimum eigenvalue of the covariance matrix in the update step of the CMA-ESwM algorithm is less than $10^{-30}$). To further analyze this, we consider the following different values of $r$, $\{10, 100, 1000\}$ and different $n$ (length of the integer string) from $10$ to $100$ in steps of $10$. In Figure~\ref{fig:failedRateCMAESwM} we see the proportion of failed runs over total of $100$ runs named failure rate. Note that for $n < 40$ the failure rate is zero so in Figure \ref{fig:failedRateCMAESwM}, $n \geq 40$. 

\begin{figure}
          \includegraphics[width= 0.45\textwidth]{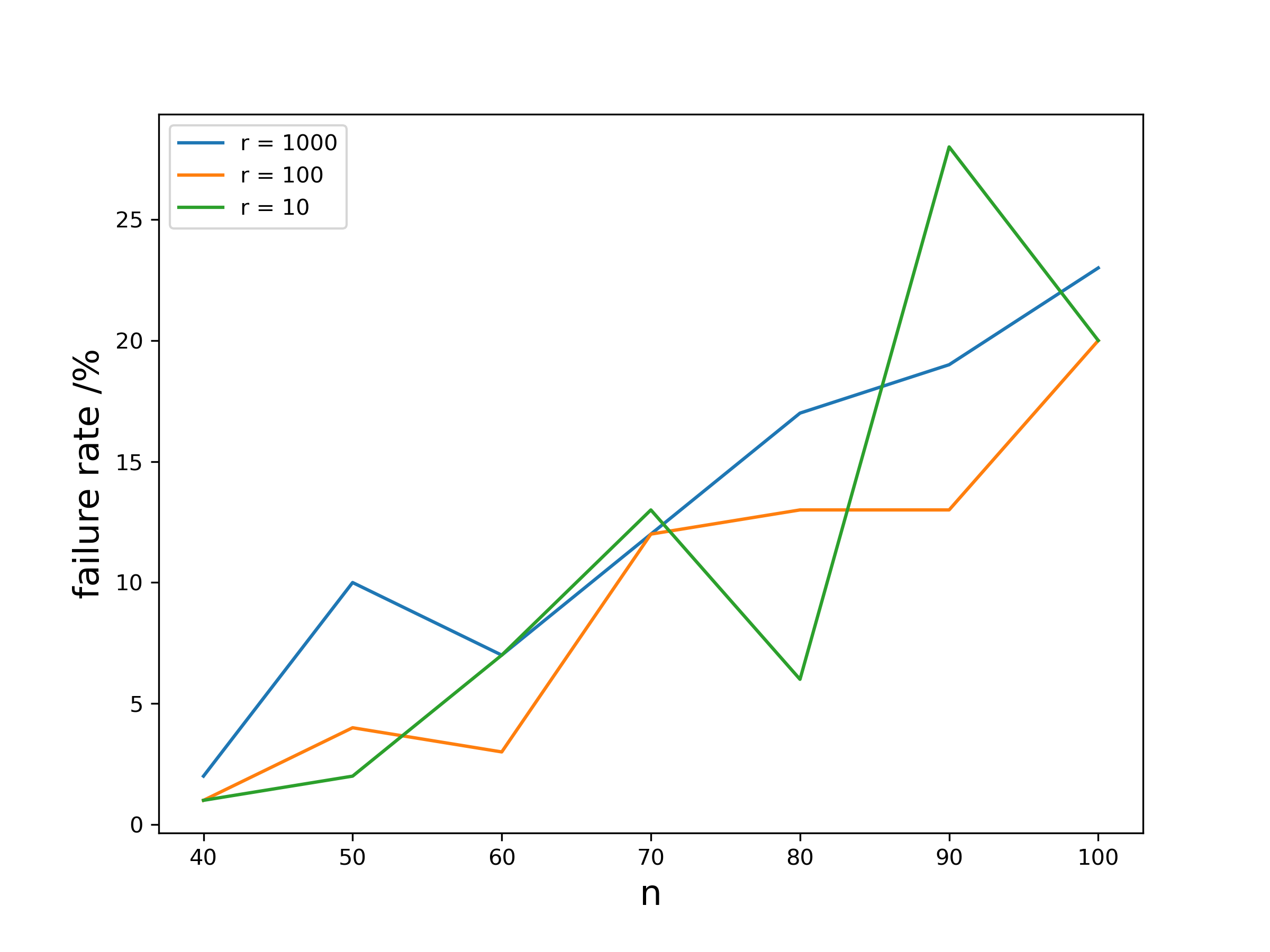}
    \caption{Failure rate of the CMA-ESwM.}
    \label{fig:failedRateCMAESwM}
\end{figure}

We can see from Figure ~\ref{fig:failedRateCMAESwM} that the
failure rate increases as $n$ increases. Also, the failure rate is significant despite $r$ being quite small. This shows the limitations of CMA-ESwM in optimizing a complete integer-valued problem.

\subsection{Comparison between different algorithms}
We present experimental results of the \ooea with the $\pm 1$ operator and the heavy-tailed operator in Figure~\ref{fig:allCom}. 
All results are averaged over $20$ independent runs. We attach the results of the statistical test for $n = 100$ in the appendix. 




\begin{figure}[ht]
          \includegraphics[width= 0.45\textwidth]{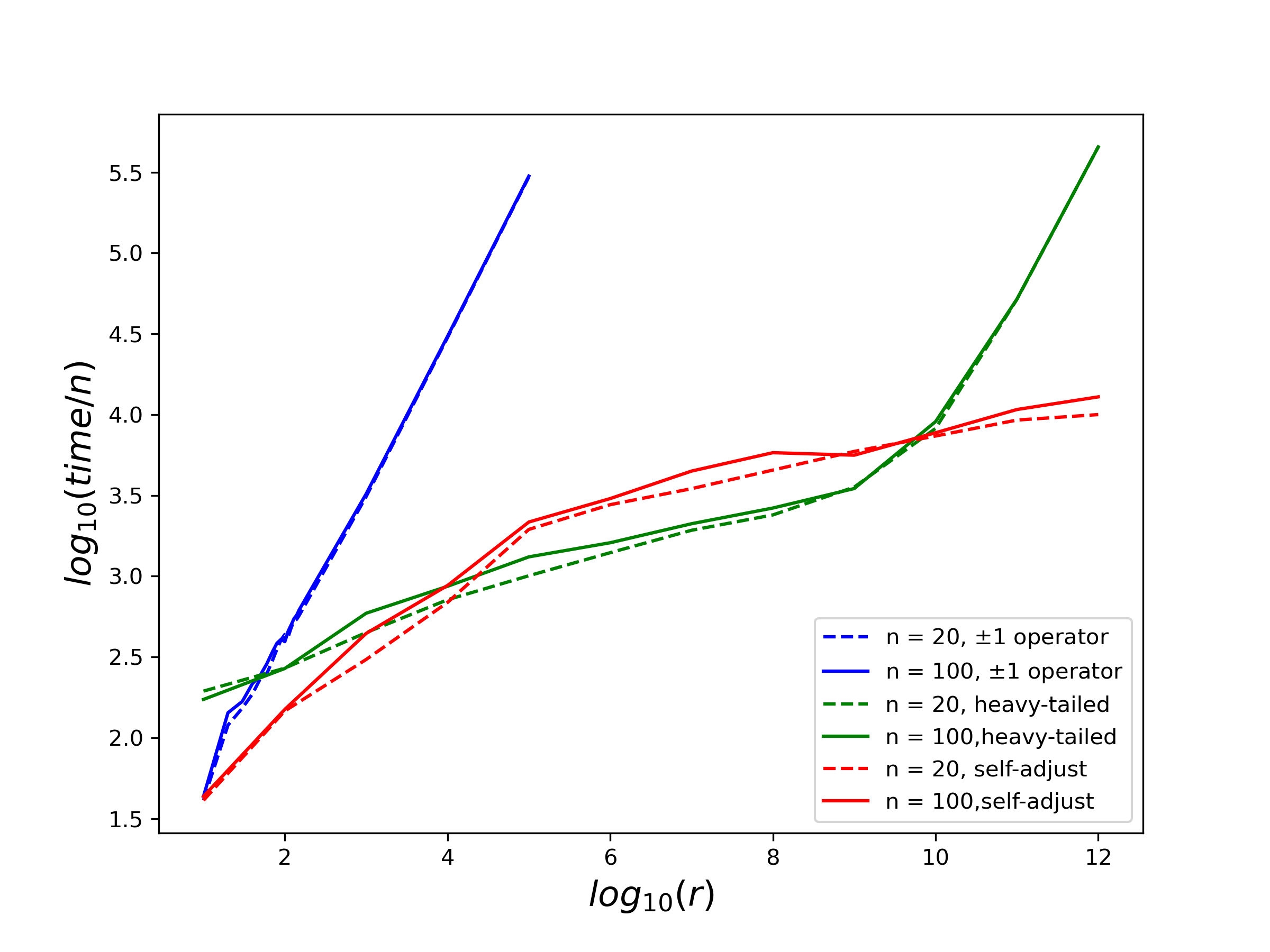}
    \caption{Run time comparison of the \ooea with two different mutation operators and the RLS with the self-adjusting operator. Different colors stand for different mutation operators and different line styles stand for different values of $n$. Note that both axes are logarithmically scaled.}
    \label{fig:allCom}
\end{figure}


In Figure~\ref{fig:allCom} we can see that the scaling behavior with respect to $r$ is independent of the value of $n$.

Asymptotically, the results are as suggested by the theoretical results given in the prior sections. However, for small values of $r$, the scaling behavior is not yet the deciding factor. In particular, the $\pm 1$ operator is competitive as long as the optimum is not much more than $r = 10^2 = 100$ away in each component. For higher values of $r$, the constantly small step size is very much detrimental to efficient search.

An interesting finding is that the heavy-tailed operator can outperform the self-adjusting RLS, in spite of what the asymptotic bounds given in this paper suggest. For small values of $r$, a lot of time is wasted on attempting larger jumps, but for middle-ranged $r$ these jumps start to pay off. In contrast, the self-adaptive RLS needs a warm-up phase adjusting its velocity value, meanwhile the heavy-tailed operator can make progress starting in the first iteration.



\bibliographystyle{ACM-Reference-Format}
\bibliography{main.bib}



\begin{acks}
This work is supported by grant FR 2988/17-1 by the German Research Foundation (DFG).
\end{acks}

\appendix

\section{Statistical test for run time comparison}

In this section we present the results of the statistical test. For each algorithm, we use one box plot to show the distribution of independent runs for $n = 100$. Each box is the first quartile ($Q1$) to the third quartile ($Q3$) of the group. The whiskers extend the box by $1.5$ times the interquartile range ($IQR$). Dots are outliers which pass whiskers.

For the other set up $n = 20$, we get a similar box plot. 

\begin{figure}[ht]
        \includegraphics[width = 0.45\textwidth]{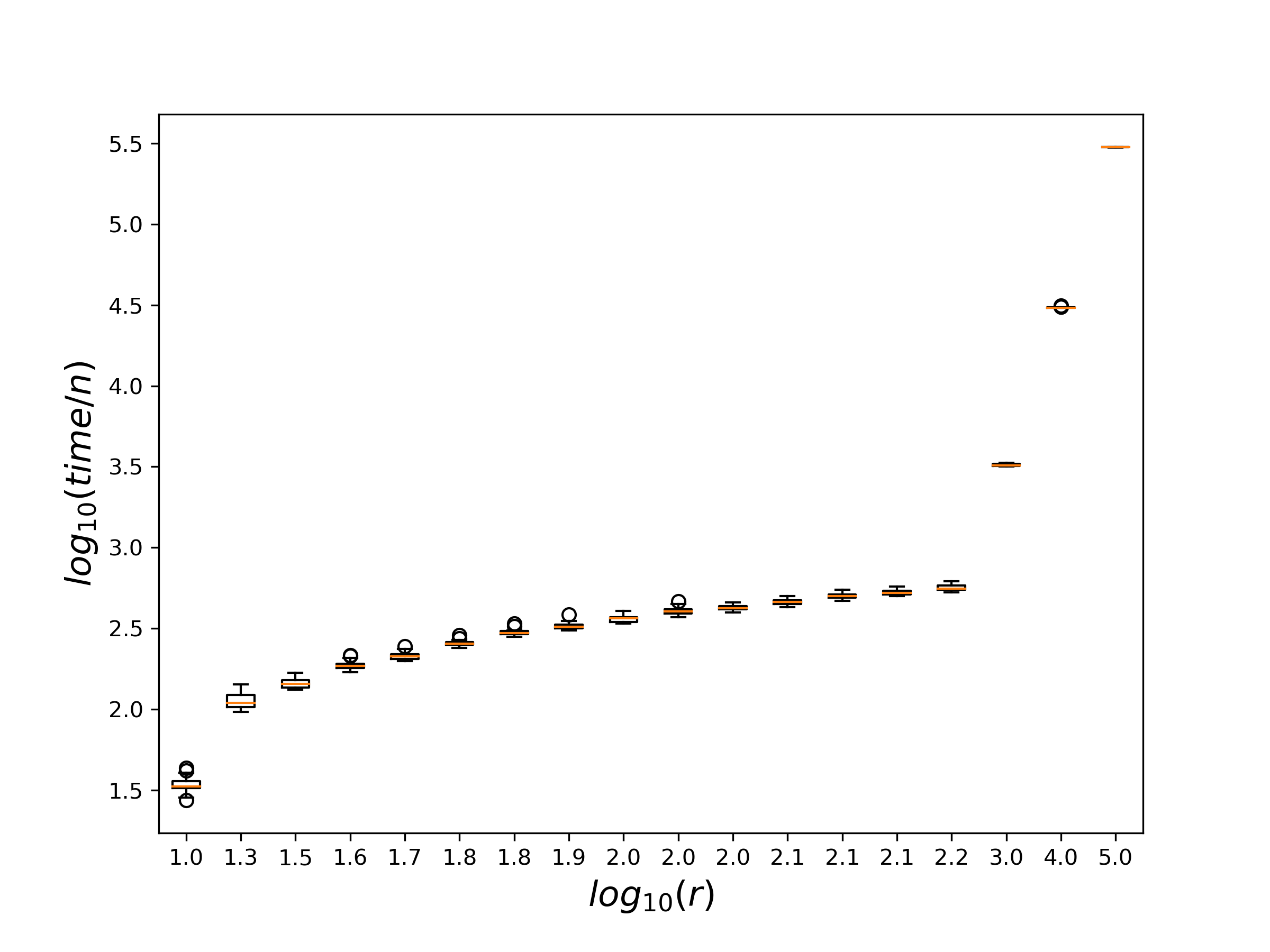}
    \caption{\ooea with $\pm 1$ operator}
    \label{fig:+-1Operator}
\end{figure}

\begin{figure}[H]
        \includegraphics[width = 0.45\textwidth]{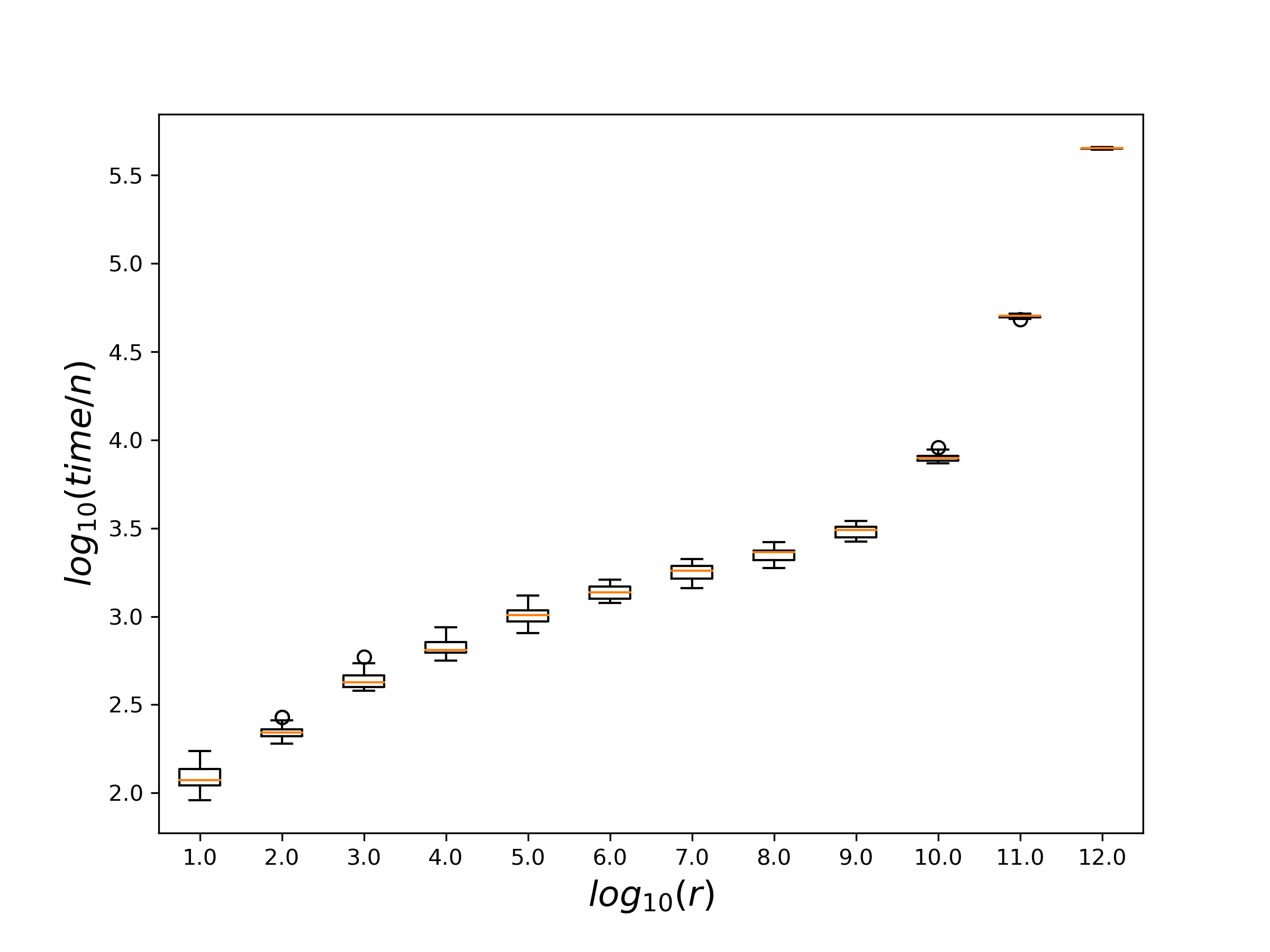}
    \caption{\ooea with \emph{heavy-tailed operator}.}
    \label{fig:heavy-tailedOperator}
\end{figure}

\begin{figure}[H]
        \includegraphics[width = 0.45\textwidth]{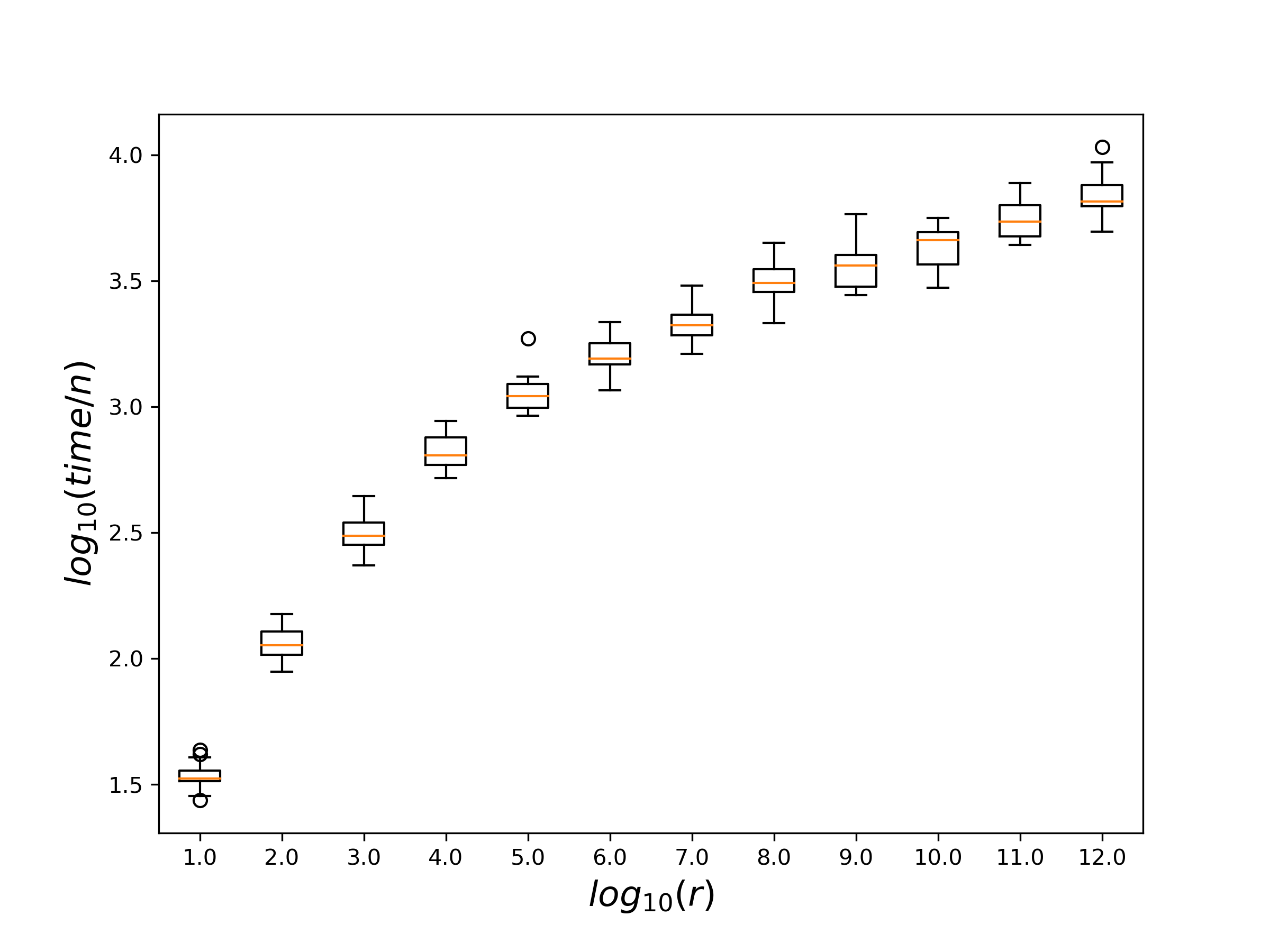}
    \caption{RLS with the self-adjusting operator.}
    \label{fig:self-adjustingOperator}

\end{figure}

    

        

        
\end{document}